\newcommand{\R}{\mathbb{R}}
\newcommand{\E}{\mathbb{E}}
\newcommand{\1}{\mathds{1}}
\pgfplotsset{compat=1.14}
\newtheorem{theorem}{Theorem}
\newtheorem{definition}{Definition}
\newtheorem{lemma}{Lemma}
\newtheorem{assumption}{Assumption}
\icmltitlerunning{Bandit Overfitting in Offline Policy Learning}
\begin{document}




\twocolumn[
\icmltitle{Offline Contextual Bandits with Overparameterized Models}



\icmlsetsymbol{equal}{*}

\begin{icmlauthorlist}
\icmlauthor{David Brandfonbrener}{nyu}
\icmlauthor{William F. Whitney}{nyu}
\icmlauthor{Rajesh Ranganath}{nyu}
\icmlauthor{Joan Bruna}{nyu}
\end{icmlauthorlist}

\icmlaffiliation{nyu}{Courant Institute of Mathematical Sciences, New York University, New York, New York, USA}

\icmlcorrespondingauthor{David Brandfonbrener}{david.brandfonbrener@nyu.edu}

\icmlkeywords{bandits, offline bandits, batch bandits, reinforcement learning, offline reinforcement learning, batch reinforcement learning}

\vskip 0.3in
]



\printAffiliationsAndNotice{}  

\begin{abstract}
    Recent results in supervised learning suggest that while overparameterized models have the capacity to overfit, they in fact generalize quite well.
    We ask whether the same phenomenon occurs for offline contextual bandits.
    Our results are mixed. 
    Value-based algorithms benefit from the same generalization behavior as overparameterized supervised learning, but policy-based algorithms do not. 
    We show that this discrepancy is due to the \emph{action-stability} of their objectives.
    An objective is action-stable if there exists a prediction (action-value vector or action distribution) which is optimal no matter which action is observed.
    While value-based objectives are action-stable, policy-based objectives are unstable.
    We formally prove upper bounds on the regret of overparameterized value-based learning and lower bounds on the regret for policy-based algorithms.
    In our experiments with large neural networks, this gap between action-stable value-based objectives and unstable policy-based objectives leads to significant performance differences.

    
    
\end{abstract}

\section{Introduction}


The offline contextual bandit problem can be used to model decision making from logged data in domains as diverse as recommender systems \citep{li2010contextual, bottou2013counterfactual}, healthcare \citep{Prasad2017ARL, Raghu2017DeepRL}, and robotics \citep{pinto2016supersizing}.
Prior work on the problem has primarily focused on underparameterized models with finite and small VC dimensions. This work has come from the bandit literature \citep{strehl2010learning, swaminathan2015counterfactual, swaminathan2015self}, the reinforcement learning literature \citep{munos2008finite, chen2019information}, and the causal inference literature \citep{bottou2013counterfactual, athey2017efficient, kallus2018balanced, zhou2018offline}.

In contrast, the best performance in modern supervised learning is often achieved by massively overparameterized models that are capable of fitting random labels \cite{zhang2016understanding}. Use of such large models renders vacuous the bounds that require a small model class. But, the massive capacity of popular neural network models is now often viewed as a feature rather than a bug. Large models reduce approximation error and allow for easier optimization \cite{du2018gradient} while still being able to generalize in regression and classification problems \cite{belkin2018overfitting, belkin2019does}. 
In this paper, we investigate whether the strong performance of overparameterized models in supervised learning translates to the offline contextual bandit setting.
The main prior work that considers this setup is \cite{joachims2018deep}, which we discuss in detail in Section \ref{sec:related}.

To formalize the differences between the supervised learning and contextual bandit settings, we introduce a novel regret decomposition.
This decomposition shares the approximation and estimation terms from classic work in supervised learning \cite{vapnik1982, bottou2008tradeoffs}, but adds a term for ``bandit'' error which captures the excess risk due to only receiving partial feedback.


We use this framework to address the question: can we use overparameterized models for offline contextual bandits? Or is the bandit error a fundamental problem when we use large models? 
We find mixed results. 
Value-based algorithms benefit from the same generalization behavior as overparameterized supervised learning, but policy-based algorithms do not. 
We show that this difference is explained by a property of their objectives called \emph{action-stability}.
An objective is action-stable if there exists a single prediction which is simultaneously optimal for any observed action (where a ``prediction'' is a vector of state-action values for a value-based objective or an action distribution for a policy-based objective).
Action-stable objectives perform well when combined with overparameterized models since the random actions taken by the behavior policy do not change the optimal prediction.
However, interpolating an unstable objective results in learning a different function for every sample of actions, even though the true optimal policy remains unchanged.

On the theory side, we prove that overparameterized value-based algorithms are action stable and have small bandit error via reduction to overparameterized regression.
Meanwhile we prove that policy-based algorithms are not action-stable which allows us to prove lower bounds on the ``in-sample'' regret and lower bounds on the regret for simple nonparametric models.

Empirically, we demonstrate the gap in both action stability and bandit error between policy-based and value-based algorithms when using large neural network models on synthetic and image-based datasets.

In summary, our main contributions are:
\begin{itemize}
    \item We introduce the concept of bandit error, which separates contextual bandits from supervised learning.
    \item We introduce action-stability and show that a lack of action-stability causes bandit error.
    \item We show a gap between policy-based and value-based algorithms based on action-stability and bandit error both in theory and experiments. 
\end{itemize}



\section{Setup}\label{sec:setup}
\subsection{Offline contextual bandit problem}

First we will define the contextual bandit problem \citep{langford2008epoch}. Let the context space $ \mathcal{X} $ be infinite and the action space $ \mathcal{A}$ be finite with $ |\mathcal{A}| = K < \infty$. 
At each round, a context $ x \in \mathcal{X}$ and a full feedback reward vector $ r \in [r_{\mathrm{min}},r_{\mathrm{max}}]^K$ are drawn from a joint distribution $ \mathcal{D}$. 
Note that $ r $ can depend on $ x $ since they are jointly distributed. 
A policy $\pi: \mathcal{X} \to \mathcal{P}(\mathcal{A})$ maps contexts to distributions over actions. An action $ a $ is sampled according to $ \pi(a|x)$ and the reward is $ r(a)$, the component of the vector $ r $ corresponding to $ a$. 
We use ``bandit feedback'' to refer to only observing $ r(a)$.
This contrasts with the ``full feedback'' problem
where at each round the full vector of rewards $ r$ is revealed, independent of the action. 

In the offline setting there is a finite dataset of $ N $ rounds with a fixed behavior policy $ \beta$.  Then we denote the dataset as $ S = \{x_i, r_i, a_i, p_i\}_{i=1}^N$ where $ p_i $ is the observed propensity $ p_i = \beta(a_i|x_i)$. The tuples in the datasets lie in $ \mathcal{X} \times [r_{min},r_{max}]^K \times \mathcal{A} \times [0,1]$ and are drawn i.i.d from the joint distribution induced by $ \mathcal{D}$ and $ \beta$.
From $ S $ we define the datasets $ S_B$ for bandit feedback and $ S_F$ for full feedback:
\begin{align*}
    S_B = \{(x_i, r_i(a_i), a_i, p_i)\}_{i=1}^N, \quad S_F = \{(x_i, r_i)\}_{i=1}^N.
\end{align*}
Note that we are assuming access to the behavior probabilities $ p_i = \beta(a_i|x_i)$, so the issues that we raise do not have to do with estimating propensities.
We will further make the following assumption about the behavior.
\begin{assumption}[Strict positivity]\label{ass:positivity}
We have strict positivity of $ \tau $ if $ \beta(a|x) \geq \tau > 0$ for all $ a, x$. Thus, in any dataset we will have $ p_i = \beta(a_i|x_i) \geq \tau > 0$.
\end{assumption}
There is important work that focuses learning without strict positivity by making algorithmic modifications like clipping \citep{bottou2013counterfactual, strehl2010learning, swaminathan2015counterfactual} and behavior constraints \citep{fujimoto2018off, laroche2019safe}. However, these issues are orthogonal to the main contribution of our paper, so we focus on the setting with strict positivity.


The goal of an offline contextual bandit algorithm is to take in a dataset and produce a policy $ \pi$ so as to maximize the value $ V(\pi)$ defined as
\begin{align*}
    V(\pi) := \E_{x, r \sim \mathcal{D}}\E_{a \sim \pi(\cdot|x)} [r(a)].
\end{align*}
We will use $ \pi^*$ to denote the deterministic policy that maximizes $ V$. 
Finally, define the $ Q $ function at a particular context, action pair as
\begin{align*}
    Q(x,a) := \E_{r|x}[r(a)].
\end{align*}

\subsection{Model classes}

The novelty of our setting comes from the use of overparameterized model classes that are capable of interpolating the training objective. 
To define this more formally, all of the algorithms we consider take a model class of either policies $ \Pi$ or $ Q$ functions $ \mathcal{Q}$ and optimize some objective over the data with respect to the model class.
Following the empirical work of \cite{zhang2016understanding} and theoretical work of \cite{belkin2018overfitting} we will call a model class ``overparameterized'' or ``interpolating'' if the model class contains a model that exactly optimizes the training objective. Formally, if we have data $ \{x_i\}_{i=1}^N$ and a pointwise loss function $ \ell(x, y)$, then a model class $ \Pi$ can interpolate the data if 
\begin{align*}
    \inf_{\pi\in \Pi} \sum_{i=1}^N \ell(x_i, \pi(x_i)) = \sum_{i=1}^N \inf_{y} \ell(x_i, y).
\end{align*}
This contrasts with traditional statistical learning settings where we assume that the model class is finite or has low complexity as measured by something like VC dimension \cite{strehl2010learning, swaminathan2015counterfactual}.

\subsection{Algorithms}\label{sec:basic-algs}

Now that we have defined the problem setting, we can define the algorithms that we will analyze. This is not meant to be a comprehensive account of all algorithms, but a broad picture of the ``vanilla'' versions of the main families of algorithms. Since we are focusing on statistical issues we do not consider how the objectives are optimized.

\paragraph{Supervised learning with full feedback.} In a full feedback problem, empirical value maximization (the analog to standard empirical risk minimization) is defined by maximizing the empirical value $ \hat V_F$: 
\begin{align}
    \hat V_F(\pi; S_F) &:= \frac{1}{N}\sum_{i=1}^N \langle r_i, \pi(\cdot|x_i)\rangle\\  \pi_F &:= \arg\max_{\pi\in \Pi} \hat V_F(\pi;S_F).
\end{align}

\paragraph{Policy-based learning.}
Importance weighted or ``inverse propensity weighted'' policy optimization directly optimizes the policy to maximize an estimate of its value. Since we only observe the rewards of the behavior policy, we use importance weighting to get an unbiased value estimate to maximize. Explicitly: 
\begin{align}
    \hat V_B(\pi ; S_B) &:= \frac{1}{N}\sum_{i=1}^N r_i(a_i) \frac{\pi(a_i|x_i)}{p_i} \\ \pi_B &:= \arg \max_{\pi \in \Pi} \hat V_B(\pi;S_B). \label{eq:pi}
\end{align}
Note that this is the ``vanilla'' version of the policy-based algorithm and modifications like regularizers, baselines/control variates, clipped importance weights, and self-normalized importance weights have been proposed \citep{bottou2013counterfactual, joachims2018deep, strehl2010learning, swaminathan2015counterfactual, swaminathan2015self}. 
For our purposes considering this vanilla version is sufficient since as we show in Section \ref{sec:stable}, any objective that takes the form $ \pi(a_i|x_i) f(x_i, a_i, r_i, p_i)$ at each datapoint will have the same sort of problem with action-stability.

It is important to note that with underparameterized model classes, this algorithm is guaranteed to return nearly the best policy in the class. Explicilty, \citet{strehl2010learning} prove that for a finite policy class $ \Pi$, with high probability the regret of the learned policy $ \pi_B$ is bounded as $ O(\frac{1}{\tau} \sqrt{\frac{\log |\Pi|}{N}})$. This is elaborated in Appendix \ref{app:small}. However, these guarantees no longer hold in our overparameterized setting.


\paragraph{Value-based learning.}
Another simple algorithm is to first learn the $ Q $ function and then use a greedy policy with respect to this estimated $ Q$ function. Explicitly:
\begin{align}\label{eqn:hatQ}
    \hat Q_{S_B} &:= \arg\min_{f \in \mathcal{Q}} \sum_{i=1}^N (f(x_i, a_i) - r_i(a_i))^2 \\ \pi_{\hat Q_{S_B}}(a|x) &:= \1 \left[a = \arg\max_{a'} \hat Q_{S_B}(x, a')\right].
\end{align}
This algorithm is sometimes called the ``direct method'' \citep{dudik2011doubly}. 
The RL literature also often defines a class of model-based algorithms, but in the contextual bandit problem there are no state transitions so model-based algorithms are equivalent to value-based algorithms.

This algorithm also has a guarantee with small model classes. Explicilty, \citet{chen2019information} prove that for a finite \emph{and well specified} model class $ \mathcal{Q}$, with high probability the regret of the learned policy $ \pi_{\hat Q_{S_B}}$ is bounded as $ O(\frac{1}{\sqrt{\tau}} \sqrt{\frac{\log |\mathcal{Q}|}{N}})$. This is elaborated in Appendix \ref{app:small}. Again, these guarantees no longer hold in our overparameterized setting.

\paragraph{Doubly robust policy optimization.}

The class of doubly robust algorithms \citep{dudik2011doubly} does not fall cleanly into the value-based or policy-based bins since it requires first learning a value function and using that to optimize a policy. However, with overparamterized models, doubly robust learning becomes exactly equivalent to our vanilla value-based algorithm unless we use crossfitting since the estimated Q values will coincide with the rewards. We prove this formally in Appendix \ref{app:dr} where we also show some issues that the doubly robust policy objective can have with overparameterized models and highly stochastic rewards. For our purposes, we will only consider the policy-based and value-based approaches since the doubly robust approach collapses to the value-based approach with overparameterized models.

\section{Bandit error}\label{sec:decomp}

In supervised learning, the standard decomposition of the excess risk separates the approximation and estimation error  \citep{bottou2008tradeoffs}. The approximation error is due to the limited function class and the estimation error is due to minimizing the empirical risk rather than the true risk. Since the full feedback policy learning problem is equivalent to supervised learning, the same decomposition applies. Formally, consider a full feedback algorithm $ \mathcal{A}_F$ which takes the dataset $ S_F $ and produces a policy $ \pi_F$. Then
\begin{align*}
    &\underbrace{\E_{S}[V(\pi^*) - V(\pi_F)]}_{\textrm{regret}} = \underbrace{V(\pi^*) - \sup_{\pi\in \Pi}V(\pi)}_{\textrm{approximation\ error}} \\ &\qquad+ \underbrace{\E_{S}[ \sup_{\pi\in \Pi}V(\pi) - V(\pi_F)]}_{\textrm{estimation\ error}}.
\end{align*}
We can instead consider a bandit feedback algorithm $ \mathcal{A}_B$ which takes the dataset $ S_B $ and produces a policy $ \pi_B$.
To extend the above decomposition to the bandit problem we add a new term, the bandit error, that results from having access to $ S_B$ rather than $ S_F$. Now we have:
\begin{align*}
    &\underbrace{\E_{S}[V(\pi^*) - V(\pi_B)]}_{\textrm{regret}} = \underbrace{V(\pi^*) - \sup_{\pi\in \Pi}V(\pi)}_{\textrm{approximation\ error}} \\&\quad+ \underbrace{\E_{S}[ \sup_{\pi\in \Pi}V(\pi) - V(\pi_F)]}_{\textrm{estimation\ error}} + \underbrace{\E_{S}[V(\pi_F) - V(\pi_B)]}_{\textrm{bandit\ error}}.
\end{align*}

\paragraph{Disentangling sources of error.} 
The approximation error is the same quantity that we encounter in the supervised learning problem, measuring how well our function class can do.
The estimation error measures the error due to overfitting on finite contexts and noisy rewards.
The bandit error accounts for the error due to only observing the actions chosen by the behavior policy. 
This is not quite analogous to overfitting to noise in the rewards since stochasticity in the actions is actually required to have the coverage of context-action pairs needed to learn a policy.
While the standard approximation-estimation decomposition could be directly extended to the bandit problem, our approximation-estimation-bandit decomposition is more conceptually useful since it disentangles these two types of error.

\paragraph{Can bandit error be negative?}
Usually, we think about an error decomposition as a sum of positive terms. 
This is not necessarily the case with our decomposition, but we view this as a feature rather than a bug.
Intuitively, the bandit error term captures the contribution of the actions selected by the behavior policy. 
If the behavior policy is nearly optimal and the rewards are highly stochastic, there may be more signal in the actions selected by the behavior policy than the observed rewards. 
Thus overfitting the actions chosen by behavior policy can sometimes be beneficial, causing the bandit error to be negative. 
The two terms disentangle the approximation error (due to reward noise) from bandit error (due to behavior actions).

\section{Action-stable objective functions}\label{sec:stable}


Consider a simple thought experiment.
We collect a contextual bandit dataset $S_B$ from a two-action environment using a uniformly random behavior policy.
Then we construct a second dataset $\widetilde S_B$ by evaluating the outcome of taking the opposite action at each observed context.
Since nothing about the environment has changed, we know that the optimal policy remains the same.
Therefore we desire the following property from a bandit objective: there exists a single model which is optimal (with respect to that objective) on both $S_B$ and $\widetilde S_B$.
We say that such an objective is \emph{action-stable} because it has an optimal policy which is stable to re-sampling of the actions in the dataset.


More formally, we define action stability pointwise at a datapoint $ z = (x,r,p)$ where $ r\in [r_{\min}, r_{\max}]^K$ and $ p \in \Delta^K$ is the behavior probability vector in the $K$-dimensional simplex (recall that $ K $ is the number of the actions). Let $ z(a)$ denote the datapoint when action $ a $ is sampled so that $ z(a) = (x, r(a), p(a), a)$. The objectives for both policy and value-based algorithms decompose into sums over the data of some loss $ \ell(z(a), \pi(a|x))$ or $ \ell(z(a), Q(x,a))$.

Note that the output space of a policy is the simplex so that $ \pi(\cdot|x) \in \Delta^K$, while the output of a Q function\footnote{When using neural networks Q is usually implemented as a function of $ x $ with $ K $ outputs \cite{mnih2015human}} is $ Q(x, \cdot) \in \R^K$. 
To allow for this difference in our definition, we will define a generic $ K$-dimensional output space $ \mathcal{Y}^K$ and its corresponding restriction to one dimension as $ \mathcal{Y}$. 
So for a policy-based algorithm $ \mathcal{Y}^K = \Delta^K$ and $ \mathcal{Y} = [0,1]$, while for a value-based algorithm $ \mathcal{Y}^K = \R^K$ and $ \mathcal{Y} = \R$. Now we can define action-stability. 

\begin{definition}[Action-stable objective]
An objective function $ \ell$ is action-stable at a datapoint $ z$ if there exists $ y^* \in \mathcal{Y}^K$ such that for all $ a \in \mathcal{A}$:
\begin{align*}
    \ell(z(a), y^*(a)) = \min_{y \in \mathcal{Y}} \ell(z(a), y).
\end{align*}
\end{definition}

If an objective is not action-stable, a function which minimizes that objective exactly at every datapoint $(x, r(a), p(a), a)$ does \emph{not} minimize it for a different choice of $a$.
As a direct consequence, interpolating an unstable objective results in learning a different function for every sample of actions, even though the true optimal policy remains unchanged.

We find that policy-based objectives are not action-stable, while value-based objectives are.
In the next section we will use the instability of policy-based objectives to show that policy-based algorithms exhibit large bandit error when used with overparameterized models.
Our stability results are stated in the following two Lemmas, whose proofs can be found in Appendix \ref{app:stable}.




\begin{restatable}[Value-based stability]{lemma}{vbstable}\label{lem:vb-stable}
Value-based objectives are action stable since we can let $ y^* = r$ and this minimizes the square loss at every action.
\end{restatable}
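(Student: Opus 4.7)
The plan is essentially read off from the statement itself, since the value-based loss from equation \eqref{eqn:hatQ} is the pointwise squared loss $\ell(z(a), Q(x,a)) = (Q(x,a) - r(a))^2$, and the output space for a value function is $\mathcal{Y}^K = \R^K$, which imposes no coupling between the $K$ coordinates of a prediction $y \in \mathcal{Y}^K$.

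First I would fix a datapoint $z = (x, r, p)$ and set $y^* := r \in \R^K$, the full (counterfactual) reward vector. Then, for any action $a \in \mathcal{A}$, the $a$-th coordinate satisfies $y^*(a) = r(a)$, so $\ell(z(a), y^*(a)) = (r(a) - r(a))^2 = 0$. Since the squared loss is nonnegative on $\mathcal{Y} = \R$ and is zero precisely at $y = r(a)$, we have $\ell(z(a), y^*(a)) = \min_{y \in \mathcal{Y}} \ell(z(a), y)$ simultaneously for every $a \in \mathcal{A}$. This verifies the definition of action-stability at $z$, and the argument is pointwise in $z$, so it holds at every datapoint.

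There is no real obstacle: the only structural fact being used is that the output space $\R^K$ is unconstrained across coordinates, allowing the prediction at coordinate $a$ to match $r(a)$ independently for each $a$. This should be contrasted with the policy-based case, where $\mathcal{Y}^K = \Delta^K$ forces the per-coordinate minimizers of the importance-weighted loss (which push mass onto the sampled action whenever $r(a) > 0$) to compete on the simplex, and that coupling is exactly what the companion instability lemma must exploit.
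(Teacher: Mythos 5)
Your proof is correct and matches the paper's argument exactly: both set $y^* = r$, observe that the squared loss $(r(a) - y^*(a))^2 = 0$ is the pointwise minimum over $\mathcal{Y} = \R$ for every action $a$ simultaneously, and conclude action-stability from the definition. The additional remark about $\R^K$ imposing no coupling across coordinates (in contrast to the simplex constraint in the policy-based case) is a nice observation but not needed beyond what the paper already does.
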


\begin{restatable}[Policy-based instability]{lemma}{pbstable}\label{lem:pb-stable}
All policy-based objectives which take the form $ \ell(z(a), \pi(a|x)) = f(z(a)) \pi(a|x)$ are not action-stable at $ z$ unless $ f(z(a)) > 0$ for exactly one action $ a$. 
\end{restatable}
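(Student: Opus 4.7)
The plan is to combine two structural facts about the policy-based objective. First, since $\ell(z(a), \pi(a|x)) = f(z(a))\pi(a|x)$ is linear in its second argument, its optimum over $[0,1]$ is attained at an endpoint: uniquely at $y=1$ when $f(z(a)) > 0$, uniquely at $y=0$ when $f(z(a)) < 0$, and at any point in $[0,1]$ when $f(z(a)) = 0$. Second, because a policy's output space is the simplex $\Delta^K$, any candidate $y^*$ must satisfy $\sum_a y^*(a) = 1$ with $y^*(a) \geq 0$.

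First I would determine, for each action $a$, the set of pointwise optimizers using the linearity observation. Action-stability then demands the existence of a single $y^* \in \Delta^K$ whose coordinates simultaneously lie in all $K$ of these pointwise optimizer sets; this is what reduces the claim to a simple feasibility question on the simplex.

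The core of the argument is a pigeonhole on the simplex. If two distinct actions $a_1 \neq a_2$ both had $f(z(a_i)) > 0$, pointwise optimality would force $y^*(a_1) = y^*(a_2) = 1$, overrunning the normalization budget $\sum_a y^*(a) = 1$, so no feasible $y^*$ exists. Conversely, if no action has $f(z(a)) > 0$ and at least one has $f(z(a)) < 0$, pointwise optimality pins $y^*(a) = 0$ on the negative coordinates and leaves no way to reach total mass $1$ on the remaining coordinates (which are also constrained to $y^*(a) = 0$ unless $f$ vanishes there). Thus stability is possible only when exactly one action has $f(z(a)) > 0$, in which case $y^* = e_{a_0}$ witnesses stability, establishing the ``unless'' direction.

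The one genuine subtlety I anticipate is the boundary case $f(z(a)) = 0$, where the pointwise optimizer set is the entire interval $[0,1]$ and can absorb slack without violating pointwise optimality. I would handle this by noting that the lemma's conclusion is generic: under any nondegenerate joint distribution on rewards and propensities, $f(z(a)) = 0$ is a measure-zero event, so the stated characterization holds almost surely. Beyond that corner case, the proof is essentially a one-line pigeonhole, and the main care is in stating the quantifiers so that the per-datapoint (pointwise in $z$) nature of the claim is explicit and the simplex constraint is used cleanly.
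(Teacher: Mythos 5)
Your proof is correct and follows essentially the same route as the paper's: a sign-based case analysis on $f(z(a))$ combined with the infeasibility of the resulting pointwise optimizers on the simplex (two or more positive coordinates force two entries of $y^*$ to equal $1$, while all-negative forces the zero vector, neither of which lies in $\Delta^K$; exactly one positive coordinate admits the indicator vector as a witness). Your explicit treatment of the degenerate case $f(z(a)) = 0$ is in fact slightly more careful than the paper's proof, whose three-way case split silently omits it.
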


These Lemmas tell us that the stochasticity of the behavior policy can cause instability for policy-based objectives. 
This is worrisome since one would hope that more stochastic behavior policies give us more information about all the actions and should thus yield better policies. Indeed, this is the case for value-based algorithms as we will see in the next section. But for policy-based algorithms, stochastic behavior can itself be a cause of overfitting due to the instability of the objective function.

\paragraph{Stabilizing policy-based algorithms.} To avoid this problem in a policy-based algorithm, the sign of the function $f(z(a))$ must indicate the optimal action. This essentially requires having access to a \emph{baseline} function $ b(s) $ that separates the optimal action from all the others so that $ r(a) > b(s)$ if and only if $ a $ is the optimal action. And then $  f(z(a)) = \frac{r(a) - b(s)}{\beta(a|s)}$ yields an action-stable algorithm. This is in general as difficult as learning the full value function $ Q$. One notable special case is when the bandit problem is induced by an underlying classification problem, so that only one action has reward 1 and all others have 0. In this case, any constant baseline between 0 and 1 will lead to action stability. This case has often been considered in the literature, e.g. by \citet{joachims2018deep} as we discuss in Section \ref{sec:related}.

Now that we have built up an understanding of the problem we can prove some formal results that show how value-based algorithms more effectively leverage overparameterized models by being action-stable.

\section{Regret bounds}

Recall that as explained in Section \ref{sec:setup}, both policy-based and value-based algorithms have regret guarantees when we use small model classes \cite{strehl2010learning, chen2019information}. But, when we move to the overparameterized setting, this is no longer the case. In this section we prove regret upper bounds for value-based learning by using recent results in overparameterized regression. Then we prove lower bounds on the regret of policy-based algorithms due to their action-instability.

\subsection{Value-based learning}

In this section we show that value-based algorithms can provably compete with the optimal policy. 
The key insight is to reduce the problem to regression and then leverage the guarantees on overparameterized regression from the supervised learning literature. 
This is formalized by the following theorem.


\begin{restatable}[Reduction to regression]{theorem}{reduction}\label{thm:reduction}
By Assumption \ref{ass:positivity} we have $ \beta(a|x) \geq \tau$ for all $ x,a$. Then with $ \hat Q_{S_B}$ as defined in (\ref{eqn:hatQ}) we have  
\begin{align*}
    V(\pi^*) - V(\pi_{\hat Q_{S_B}}) \leq \frac{2}{\sqrt{\tau}} \sqrt{\mathop{\mathbb{E}}_{x, a \sim \beta}[(Q(x,a) - \hat Q_{S_B}(x,a))^2]}.
\end{align*}
\end{restatable}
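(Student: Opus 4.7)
The plan is to reduce the regret of the greedy policy $\pi_{\hat Q_{S_B}}$ to the pointwise accuracy of $\hat Q_{S_B}$ as a regressor under the behavior distribution, via three standard moves: a greedy-policy regret inequality, a change of measure using Assumption \ref{ass:positivity}, and Jensen for the square root.

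First I would rewrite $V(\pi^*) - V(\pi_{\hat Q_{S_B}}) = \mathbb{E}_x\bigl[Q(x,\pi^*(x)) - Q(x,\hat a(x))\bigr]$, where $\hat a(x) = \arg\max_{a'} \hat Q_{S_B}(x,a')$. For each fixed $x$, add and subtract $\hat Q_{S_B}$ at both $\pi^*(x)$ and $\hat a(x)$; since $\hat Q_{S_B}(x,\hat a(x)) \ge \hat Q_{S_B}(x,\pi^*(x))$ by definition of $\hat a$, the cross term has a favorable sign, and we are left with
\[
Q(x,\pi^*(x)) - Q(x,\hat a(x)) \le \bigl|Q - \hat Q_{S_B}\bigr|(x,\pi^*(x)) + \bigl|Q - \hat Q_{S_B}\bigr|(x,\hat a(x)) \le 2 \max_{a} \bigl|Q(x,a) - \hat Q_{S_B}(x,a)\bigr|.
\]

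Next I would convert this maximum over actions into a $\beta$-weighted average. Bounding max by sum and then multiplying and dividing each term by $\beta(a|x)$,
\[
\max_a \bigl(Q - \hat Q_{S_B}\bigr)^2(x,a) \le \sum_a \bigl(Q - \hat Q_{S_B}\bigr)^2(x,a) \le \frac{1}{\tau}\sum_a \beta(a|x)\bigl(Q - \hat Q_{S_B}\bigr)^2(x,a) = \frac{1}{\tau}\,\mathbb{E}_{a\sim\beta(\cdot|x)}\bigl[(Q-\hat Q_{S_B})^2(x,a)\bigr],
\]
using Assumption \ref{ass:positivity} in the middle step. Taking square roots and combining with the previous display gives a pointwise-in-$x$ bound of $\tfrac{2}{\sqrt{\tau}}\sqrt{\mathbb{E}_{a\sim\beta(\cdot|x)}[(Q-\hat Q_{S_B})^2(x,a)]}$.

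Finally I would take the outer expectation over $x$ and push it inside the square root by Jensen's inequality (concavity of $\sqrt{\,\cdot\,}$), yielding $\tfrac{2}{\sqrt\tau}\sqrt{\mathbb{E}_{x,a\sim\beta}[(Q - \hat Q_{S_B})^2(x,a)]}$, which is the stated bound. I do not expect a genuine obstacle here; the only step that could trip one up is making sure the greedy inequality is applied so that the sign of $\hat Q_{S_B}(x,\hat a) - \hat Q_{S_B}(x,\pi^*)$ works for us, and that the $1/\tau$ factor comes in only once (before Jensen), so that the final regret scales as $1/\sqrt{\tau}$ rather than $1/\tau$.
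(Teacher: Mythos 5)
Your proposal is correct and follows essentially the same route as the paper's proof, which decomposes the regret via the greedy property of $\pi_{\hat Q_{S_B}}$ (Lemma~\ref{lem:mismatch}), changes measure from the evaluation policy to $\beta$ at cost $1/\tau$ (Lemma~\ref{lem:transfer}), and applies Jensen to pass to the root-mean-squared error. The only cosmetic difference is that you bound the per-context error by a maximum over actions before importing $\beta$, whereas the paper keeps expectations under $\pi^*$ and $\hat\pi$ and takes a supremum over policies; both yield the same $2/\sqrt{\tau}$ constant.
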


A proof can be found in Appendix \ref{app:value}. Similar results are presented as intermediate results in \citet{chen2019information, munos2008finite}.
The implication of this result that we want to emphasize is that any generalization guarantees for overparameterized regression immediately become guarantees for value-based learning in offline contextual bandits. Essentially, Theorem \ref{thm:reduction} gives us a regret bound in any problem where overparameterized regression works.
The following results from the overparameterized regression literature demonstrate a few of these guarantees, which all require some sort of regularity assumption on the true $Q$ function to bound the regression error: 




\begin{itemize}
    \item The results of \cite{bartlett2020benign} give finite sample rates for overparameterized linear regression by the minimum norm interpolator depending on the covariance matrix of the data and assuming that the true function is realizable.
    \item The results of \cite{belkin2019does} imply that under smoothness assumptions on $ Q$, a particular singular kernel will interpolate the data and have optimal non-parametric rates. After applying our reduction, the rates are no longer optimal for the policy learning problem due to the square root.
    \item The results of \cite{bach2017breaking} show how choosing the minimum norm infinite width neural network in a particular function space can yield adaptive finite sample guarantees for many types of underlying structure in the $ Q $ function.
    \item The results of \cite{cover1968estimation} imply the consistency of a one nearest neighbor regressor when the rewards are noiseless and $ Q $ is piecewise continuous. This will contrast nicely with Theorem \ref{thm:nn} below. 
\end{itemize}
Each of these guarantees implies a corresponding corollary to Theorem \ref{thm:reduction} resulting in a regret bound for that particular combination of model and assumptions on $ Q$.

\subsection{Policy-based learning}

Now we will show how the policy-based learning algorithms can provably fail because they lack action-stability. We will do this in a few ways. First, we will show that on the contexts in the dataset an action-unstable algorithm must suffer regret. This means that we cannot even learn the optimal policy at the contexts seen during training. Then to deal with generalization beyond the dataset we will prove a regret lower bound for a specific overparameterized model, namely one nearest neighbor. Finally, we discuss a conjecture that such lower bounds can be extended to richer model classes like neural networks.

Since we are proving lower bounds, making any more simplifying assumptions only makes the bound stronger. As such, all of our problem instances that create the lower bounds have only two actions ($ K = 2$).

\paragraph{Regret on the observed contexts.} Before considering how a policy generalizes off of the data, it is useful to consider what happens at the contexts in the dataset. This is especially true for overparameterized models which can exactly optimize the objective on the dataset. To do this, we will define the value of a policy $ \pi$ on the contexts in a dataset $ S $ (which we will call the ``in-sample'' value) by 
\begin{align}
    V(\pi; S) := \frac{1}{N}\sum_{i=1}^N \E_{r|x_i}\E_{a\sim \pi(\cdot|x_i)}[r(a)].
\end{align}
Then the following Theorem shows that the policy $\pi_B$ learned by the simple policy-based algorithm in Equation (\ref{eq:pi}) must suffer substantial regret on $S$. 

\begin{restatable}[In-sample regret lower bound]{theorem}{vsthm}\label{thm:vs}
Let $ K=2$ and the policy class be overparameterized. 
Define $ \Delta_r(x) = \left|\E_{r|x} [r(1) - r(2)]\right|$ as the absolute expected gap in rewards at $ x$.
Define $p_u(x)$ to be the probability that the policy-based objective is action-unstable at $ x$. Recall that $ \beta(a|x) \geq \tau$ by Assumption \ref{ass:positivity}.
Then
\begin{align*}
    \E_S[V(\pi^*;S) - &V(\pi_B; S)] \geq  \tau \E_{x}\big[ p_u(x) \Delta_r(x)\big].  
\end{align*}
\end{restatable}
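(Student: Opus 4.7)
The plan is to work directly with the in-sample regret and exploit the fact that overparameterization forces $\pi_B(\cdot|x_i)$ to be a point mass whose atom is determined by $(a_i, \mathrm{sign}(r_i(a_i)))$; the strict positivity of $\beta$ then makes this atom suboptimal with probability at least $\tau$ on the unstable event, which yields the bound.

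First I would give a pointwise characterization of $\pi_B$ on the training contexts. Because the policy class is overparameterized and $\hat V_B$ decouples across datapoints as $\sum_i r_i(a_i)\pi(a_i|x_i)/p_i$, any maximizer $\pi_B$ must satisfy $\pi_B(a_i|x_i) = \1[r_i(a_i) > 0]$ at each index $i$ (the boundary $r_i(a_i)=0$ is measure zero and does not contribute to the expected regret). With $K=2$ this determines $\pi_B(\cdot|x_i)$ entirely: it equals $\delta_{a_i}$ when $r_i(a_i) > 0$ and $\delta_{a_i'}$ (the unobserved action) when $r_i(a_i) < 0$.

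Next, fix a context $x$ and order the actions so that $Q(x,1) \geq Q(x,2)$, so that $\Delta_r(x) = Q(x,1) - Q(x,2)$ and $V(\pi^*)|_x - V(\pi_B)|_x = \Delta_r(x)\,\pi_B(2|x)$. By Lemma \ref{lem:pb-stable} the objective is action-unstable at $(x_i, r_i, p_i)$ precisely when $r_i(1)$ and $r_i(2)$ share a common nonzero sign, an event that depends only on $r_i$ and hence is independent of $a_i$ given $x_i$. Combined with the previous step, on the both-positive sub-event $\pi_B(2|x) = \1[a_i = 2]$, while on the both-negative sub-event $\pi_B(2|x) = \1[a_i = 1]$. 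In either sub-case the conditional expectation of $\pi_B(2|x)$ over $a_i \sim \beta(\cdot|x)$ is at least $\tau$ by Assumption \ref{ass:positivity}, so the conditional expected regret at $x$ on the unstable event is at least $\tau \Delta_r(x)$.

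Finally, I would multiply by $p_u(x)$ and integrate over $x$; the contributions from the stable event are nonnegative and can simply be dropped to give $\E_S[V(\pi^*;S) - V(\pi_B;S)] \geq \tau\,\E_x[p_u(x)\Delta_r(x)]$. The main delicate point will be keeping the conditioning consistent throughout: instability is a function of $r$, the learned atom of $\pi_B$ depends on both $r_i$ and $a_i$, and the in-sample value is taken against $Q(x,\cdot) = \E[r\mid x]$. The conditional independence of $a_i$ and $r_i$ given $x_i$ is exactly what lets $\beta(\cdot|x)$ factor out uniformly and is ultimately the source of the factor $\tau$ in the bound.
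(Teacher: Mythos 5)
Your proposal is correct and follows essentially the same route as the paper's proof: it establishes the pointwise characterization of the interpolating maximizer (the paper's Lemma \ref{lem:policy_decomp}), conditions on the action-unstable event (which depends only on $r_i$ given $x_i$), and uses strict positivity to show the suboptimal atom is selected with probability at least $\tau$ before integrating. The only cosmetic difference is that the paper tracks the two actions $a_{i,B}^1 \neq a_{i,B}^2$ and sums their regret contributions, whereas you fix an ordering of the actions and bound $\E[\pi_B(2|x)]$ directly; the conclusions coincide.
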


The full proof is in Appendix \ref{app:pb}.
This Theorem tells us that as often as the objective is action-unstable, we can suffer substantial regret even \emph{on the contexts in the dataset}. We now offer some brief intuition of the proof. When we have two actions and an algorithm is not action-stable at $ x $, then action chosen by the learned policy $ \pi_B$ at $ x_i$ is directly dependent on the observed action $ a_i$. Since the behavior $ \beta$ will choose each action with probability at least $ \tau$ by Assumption \ref{ass:positivity}, the learned policy $ \pi_B$ must choose the suboptimal action with probability at least $ \tau$ at $ x_i$. This causes unavoidable regret for unstable algorithms, as formally stated in Theorem \ref{thm:vs}. 

Note that Theorem \ref{thm:vs} essentially says that action-unstable algorithms convert noise in the behavior into regret. This is the essential problem with unstable algorithms. Rather than using stochasticity in the behavior policy to get estimates of counterfactual actions, an action-unstable algorithm is sensitive to this stochasticity like it is label noise in supervised learning. 

In Appendix \ref{app:noisy} we present a result that makes this connection between behavior policy noise and action-instability more direct. Specifically we show a reduction that takes any classification problem and turns it into a bandit problem such that optimizing the policy-based objective is equivalent to solving a noisy variant of the classification problem. On the contrary, optimizing the full-feedback objective is equivalent to the noiseless classification problem. 

The limitation of this result is that it only applies in-sample and does not rule out that the model class could leverage its inductive bias to perform well away from the training data.
Next we convert this in-sample regret lower bound into a standard regret lower bound for a particular simple interpolating model class, the nearest neighbor policy.

\paragraph{Regret with generalization: nearest neighbor models.}

The above result shows what happens at the contexts \emph{in the dataset} $ S$. It seems that only pathological combinations of model class and problem instance could perform poorly on $ S$ but recover strong performance off of the data. However, it cannot be ruled out a priori if the model class has strong inductive biases to generalize well. In this section we will show that at least for a very simple overparameterized model class, the generalization of the model does not improve performance. 

The following theorem shows that the simplest interpolating model class, a one nearest neighbor rule, fails to recover the optimal policy even in the limit of infinite data.

\begin{restatable}[Regret lower bound for one nearest neighbor]{theorem}{nn}\label{thm:nn}
Let $ \Delta_r = r_{\mathrm{max}} - r_{\mathrm{min}}$. 
Then there exist problem instances with noiseless rewards where
\begin{align*}
    \limsup_{N\to \infty} \E_S[V(\pi_F) - V(\pi_B)] &=  \frac{\Delta_r}{2},
\end{align*}
but
\begin{align*}
    \limsup_{N\to \infty} \E_S[V(\pi^*) - V(\pi_F)] &= 0. 
\end{align*}
\end{restatable}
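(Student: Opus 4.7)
The plan is to exhibit an explicit problem instance, taking the policy class $\Pi$ to be the one-nearest-neighbor class, for which both limits hold simultaneously. Let $\mathcal{X} = [0,1]$ with uniform context distribution, $K = 2$, deterministic rewards $r(1) \equiv r_{\max}$ and $r(2) \equiv r_{\min}$ (shifted so that both are strictly positive; this is harmless since translating all rewards by a constant leaves every regret unchanged), and uniform behavior $\beta(a|x) = 1/2$, so Assumption \ref{ass:positivity} holds with $\tau = 1/2$. The optimal policy plays action $1$ everywhere and $V(\pi^*) = r_{\max}$.

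For full feedback, maximizing $\langle r_i, \pi(\cdot|x_i)\rangle$ puts all mass on action $1$ at each training point; the one-nearest-neighbor extrapolation then gives $\pi_F(1|x) = 1$ for all $x$ outside the measure-zero set of context ties, so $V(\pi_F) = r_{\max}$ for every $N$, which establishes $\limsup_N \E_S[V(\pi^*) - V(\pi_F)] = 0$.

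For bandit feedback, the IW objective decomposes per sample as $(r_i(a_i)/p_i)\,\pi(a_i|x_i)$, a strictly positive multiple of $\pi(a_i|x_i)$; maximizing over the simplex yields $\pi_B(\cdot|x_i) = \1\{\cdot = a_i\}$, exactly the action-unstable behavior predicted by Lemma \ref{lem:pb-stable}. The one-nearest-neighbor rule then extends this to $\pi_B(\cdot|x) = \1\{\cdot = a_{\mathrm{NN}(x)}\}$, where $\mathrm{NN}(x)$ indexes the nearest training context to $x$. For any fixed $x$, conditional on the training contexts the index $\mathrm{NN}(x)$ is deterministic, and $a_{\mathrm{NN}(x)}$ is an independent Bernoulli$(1/2)$ since actions are drawn iid from $\beta$ independently of the contexts; hence $\Pr[a_{\mathrm{NN}(x)} = 1] = 1/2$ and
\begin{align*}
V(\pi_B) = \tfrac{1}{2} r_{\max} + \tfrac{1}{2} r_{\min}.
\end{align*}
Therefore $V(\pi_F) - V(\pi_B) = (r_{\max} - r_{\min})/2 = \Delta_r/2$ for every $N$, which gives the first limit.

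The only delicate step is justifying independence of $a_{\mathrm{NN}(x)}$ from the context positions and ruling out degenerate events such as coincident training contexts or ties in the nearest-neighbor rule; both are almost-sure zero-probability events under the continuous uniform context distribution, so they do not affect the expectations. The positivity condition on the rewards is exactly what makes the per-sample IW objective prefer $\pi(a_i|x_i) = 1$ for whichever $a_i$ happened to be sampled, which is precisely the mechanism by which instability of the objective converts behavior-policy randomness into constant per-context regret.
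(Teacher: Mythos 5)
Your proposal is correct and follows essentially the same construction as the paper: two actions, constant strictly-positive reward vectors with a fixed gap, uniform behavior, and a continuous context distribution, so that $\pi_B$ copies the nearest training action (a fair coin) while $\pi_F$ is optimal, yielding exactly $\Delta_r/2$ for every $N$. The only difference is cosmetic: where the paper invokes the Cover--Hart nearest-neighbor consistency result to handle $\pi_F$, you observe directly that with a constant reward vector the full-feedback policy is exactly optimal at every context for every $N$, which is a mild simplification of the same argument.
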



The proof is in Appendix \ref{app:pb}.
This result shows that using a nearest neighbor scheme to generalize based on the signal provided by the policy-based objective is not sufficient to learn an optimal policy.
Importantly, note that since the rewards are noiseless, a nearest neighbor policy does recover the optimal policy with full feedback and Theorem \ref{thm:reduction} shows that value-based algorithms also recover the optimal policy in this setting. So, the model class is capable of solving the problem, it is the action-unstable algorithm that is causing irreducible error.

\paragraph{More complicated model classes.}
The above result for nearest neighbor models is illustrative, but does not apply to richer model classes like neural networks. While we were not able to construct such lower bounds, we conjecture that they do exist and hope that future work can prove them. We have two reasons to believe that such lower bounds exist. First, Theorem \ref{thm:vs} is agnostic to model class. For a policy to perform well despite poor performance in-sample would require strong inductive biases from the model class. Proving lower bounds requires ruling out such inductive biases as we have shown for nearest neighbor rules. Second, our empirical results presented in the next section show that policy-based algorithms have action-instability and high bandit error with neural networks. The inductive biases are not enough to overcome the poor in-sample performance.

\section{Experiments}\label{sec:exp}

In this section we experimentally verify that the phenomena described by the theory above extend to practical settings that go slightly beyond the assumptions of the theory.\footnote{Code can be found at \url{https://github.com/davidbrandfonbrener/deep-offline-bandits}.} Specifically we want to verify the following with overparameterized neural network models:
\begin{enumerate}
    \item Policy-based algorithms are action-unstable while value-based algorithms are action-stable.
    \item This causes high bandit error for policy-based algorithms, but not value-based algorithms.
\end{enumerate}

We will break the section into two parts. First we consider a synthetic problem using simple feed-forward neural nets and then we show similar phenomena when the contexts are high-dimensional images and the models are Resnets \cite{he2016deep}.

\subsection{Synthetic data}

First, we will clearly demonstrate action-stability and bandit error in a synthetic problem with a linear reward function.
Specifically, we sample some hidden reward matrix $ \theta$ and then sample contexts and rewards from isotropic Gaussians:
\begin{align*}
    \theta \sim U([0,1]^{K\times d}),\quad x \sim \mathcal{N}(0, I_d), \quad r \sim \mathcal{N}(\theta x, \epsilon I_d).
\end{align*}
Actions are sampled according to a uniform behavior:
\begin{align*}
    a\sim \beta(\cdot|x) = U(\{1,\dots,K\}).
\end{align*}

For these experiments we set $ K=2, d = 10, \epsilon = 0.1$. We take $ N = 100$ training points and sample an independent test set of 500 points. As our models we use MLPs with one hidden layer of width 512. In our experience, the findings are robust to all these hyperparameters of the problem so long as the model is overparameterized. Full details about the training procedure along with learning curves and further results are in Appendix \ref{app:experiments}.

\begin{figure}[h]
    \centering
    \includegraphics[width=0.48\textwidth]{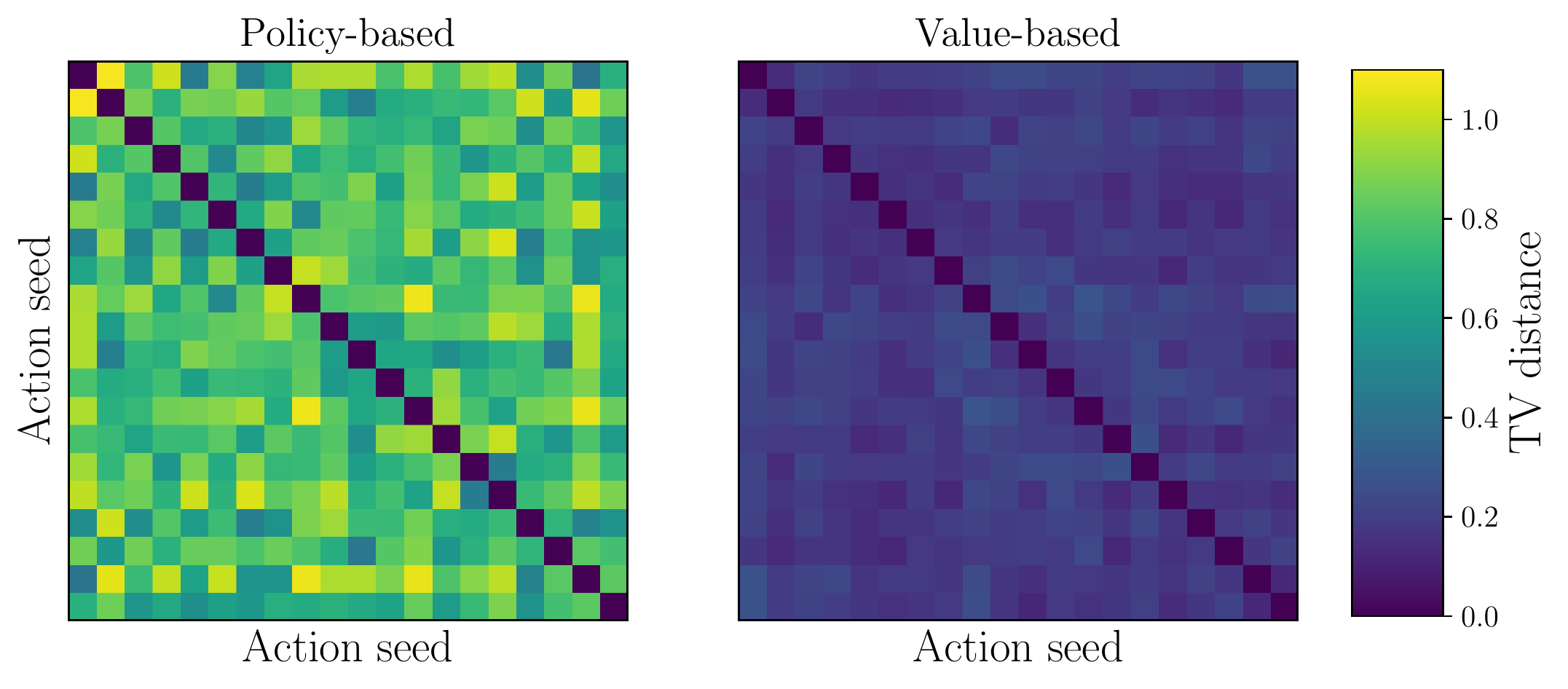}
    \caption{We test action-stability by resampling the actions 20 times for a single dataset of contexts. Each pixel corresponds to the pair of action seeds $ i,j $ and the color shows the TV distance between $ \pi_i(\cdot|x)$ and $ \pi_j(\cdot|x) $ on a held-out test set sampled from the data generating distribution. The policy-based algorithms are highly sensitive to the randomly sampled actions.}
    \label{fig:toy_stability}
\end{figure}

To confirm (1) and (2) listed above we conduct two experiments. First, to test the action-stability of an algorithm with a neural network model, we train 20 different policies on the same dataset of contexts and rewards, but with resampled actions. Formally, we sample $ \{x_i, r_i\}_{i=1}^N$ from the Gaussian distributions described above and then sample $ a_i \sim \beta(\cdot|x_i)$ with 20 different seeds. We then train each algorithm to convergence and compare the resulting policies by total variation (TV) distance. Results are shown in Figure \ref{fig:toy_stability}. We find that our results from Section \ref{sec:stable} are confirmed: policy-based algorithms are unstable leading to high TV distance between policies trained on different seeds while value-based algorithms are stable.

\begin{figure}
    \centering
    \includegraphics[width=0.3\textwidth]{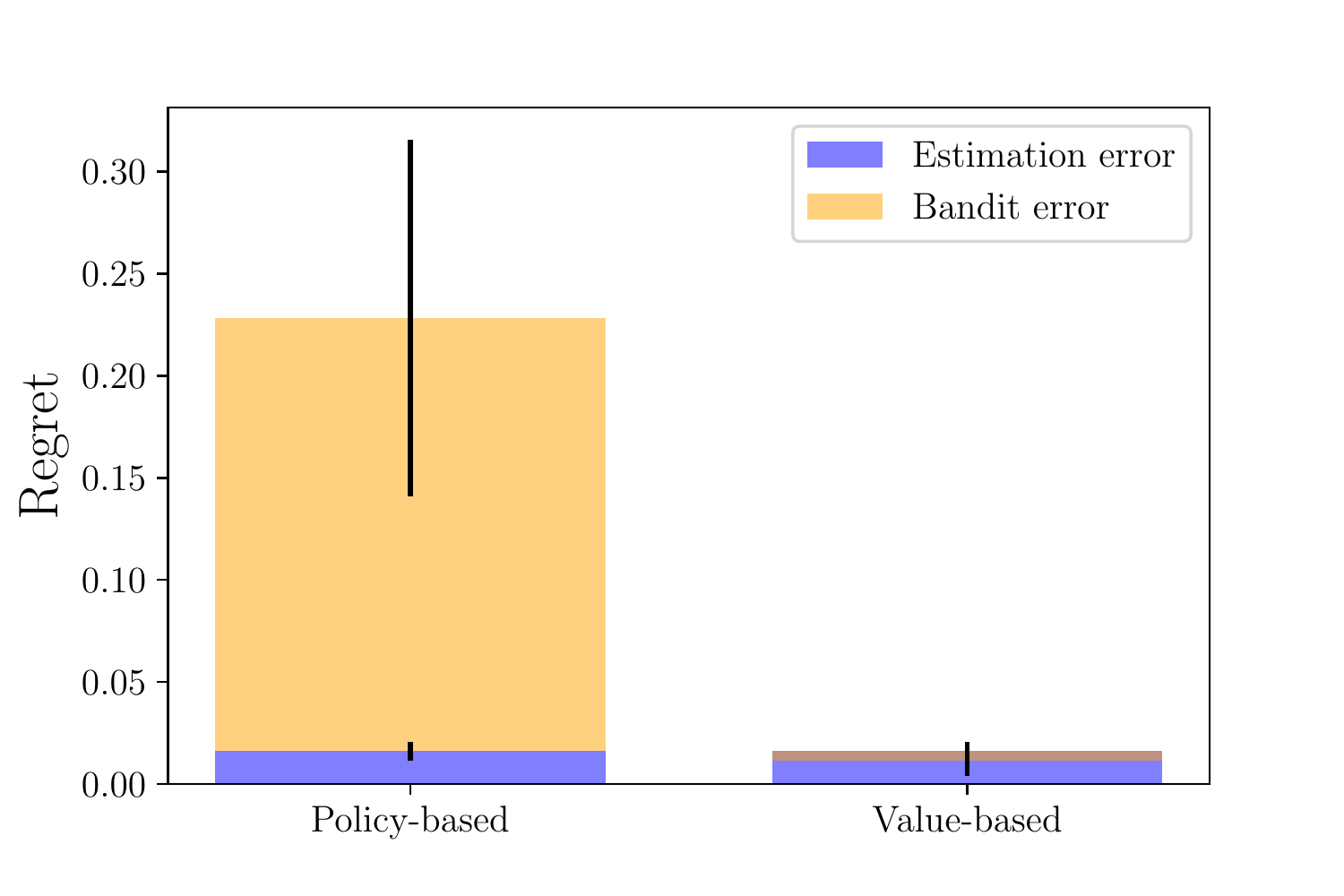}
    \caption{Estimated bandit error by averaging the values calculated on the held-out test sets for 50 independently sampled datasets. Error bars show one standard deviation. While policy-based learning has high bandit error, value-based learning has essentially zero bandit error.}
    \label{fig:toy_regret}
\end{figure}

Second, we estimate the bandit error of each algorithm. To do this we train policies to convergence for the policy-based, value-based, and full-feedback objectives 50 independently sampled datasets (where now we also resample the contexts and rewards). For this estimate, we assume perfect optimization and no approximation error. Each estimate is calculated on a held out test set. Explicitly, let $ \pi_B^j, \pi_Q^j, \pi_F^j$ are the policy-based, value-based, and full-feedback policies trained on dataset $ S^j$ with seed $ j$ and corresponding test set $ T^j$. Then we estimate bandit error as $ \frac{1}{J}\sum_{j=1}^J V(\pi_F^j;T^j) - V(\pi_B^j; T^j)$. Similarly, since we know $ \theta$ we can compute $ \pi^*$ and use this to estimate the estimation error. The results shown in Figure \ref{fig:toy_regret} demonstrate that the policy-based algorithm suffers from substantially more bandit error and thus more regret.


\subsection{Classification data}

Most prior work on offline contextual bandits conducts experiments on classification datasets that are transformed into bandit problems \cite{beygelzimer2009offset, dudik2011doubly, swaminathan2015counterfactual, swaminathan2015self, joachims2018deep, chen2019surrogate}. This methodology obscures issues of action-stability because the very particular reward function used (namely 1 for a correct label and 0 for incorrect) makes the policy-based objective action-stable. However, even minor changes to this reward function can dramatically change the performance of policy-based algorithms by rendering the objective action-unstable.

To make a clear comparison to prior work that uses deep neural networks for offline contextual bandits like \citet{joachims2018deep}, we will consider the same image based bandit problem that they do in their work. Namely, we will use the a bandit version of CIFAR-10 \citep{Krizhevsky09learningmultiple}.

To turn CIFAR into an offline bandit problem we view each possible label as an action and assign reward of 1 for a correct label/action and 0 for an incorrect label/action. We use two different behavior policies to generate training data: (1) a uniformly random behavior policy and (2) the hand-crafted policy used in \citep{joachims2018deep}. We train Resnet-18 \citep{he2016deep} models using Pytorch \citep{paszke2019pytorch}. Again full details about the training procedure are in Appendix \ref{app:experiments}.

As explained in Section \ref{sec:stable}, the policy-based objective is stable if and only if the sign of the reward minus baseline is an indicator of the optimal action. To test this insight we consider two variants of policy-based learning: ``unstable'' where we use a baseline of -0.1 so that the effective rewards are 1.1 for a correct label and 0.1 for incorrect and ``stable'' where we use a baseline of 0.1 so that the effective rewards are of 0.9 and -0.1 to make the objective stable\footnote{This corresponds to the optimal value of $ \lambda$ in the experiments of \citet{joachims2018deep}. Our ``stable'' model slightly outperforms theirs, likely due to a slightly better implementation.}. Note that this ``stable'' variant of the algorithm \emph{only} exists because we are considering a classification problem. In settings with more rich structure in the rewards, defining such an algorithm is not possible and only versions of the unstable algorithm would exist.

\begin{figure}
    \centering
    \includegraphics[width=0.48\textwidth]{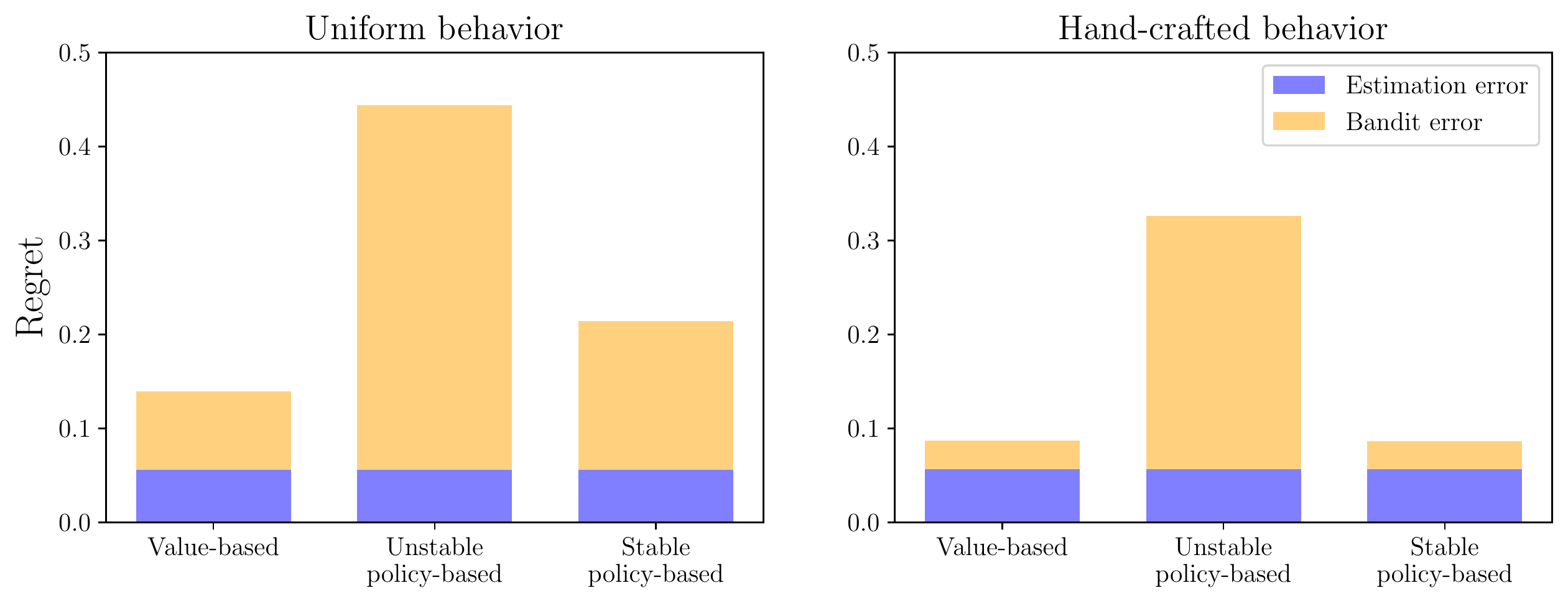}
    \caption{Estimated regret decomposition on CIFAR with uniform behavior (left) and the hand-crafted behavior of \citet{joachims2018deep} (right). We see that the value-based learning has lowest bandit error and unstable policy-based learning the most. On the hand-crafted dataset the stable policy-based algorithm performs as well as value-based learning.}
    \label{fig:cifar_regret}
\end{figure}

We again estimate the regret decomposition as we did with the synthetic data. The difference is that this time we only use one seed since we only have one CIFAR-10 dataset. The results in Figure \ref{fig:cifar_regret} confirm the results from the synthetic data. The standard (unstable) policy-based algorithm suffers from large bandit error. The value-based algorithm has the best performance across both datasets although the ``stable'' policy-based algorithm performs about as well for the hand-crafted behavior policy.

\section{Related work}\label{sec:related}

Now that we have presented our results, we will contrast them with some related work to clarify our contribution.

\subsection{Relation to propensity overfitting}
\citet{swaminathan2015self} introduce what they call ``propensity overfitting''. By providing an example, they show that policy-based algorithms overfit to maximize the sum of propensities $( \sum_i \frac{\pi(a_i|x_i)}{p_i})$ rather than the value when the rewards are strictly positive. They provide a motivating example, but no formal definition of propensity overfitting and argue that it helps to describe the gap between supervised learning and bandit learning. In contrast, we introduce and formally define bandit error, which makes this gap between supervised learning and bandit learning precise and does not rely on the specific algorithm being used. Then we introduce and formally define action-instability, which explains an important cause of bandit error for policy-based algorithms when using large models. 
By mathematically formalizing these ideas we provide a more rigorous foundation for future work.

\subsection{Relation to \cite{joachims2018deep}}

The main related work that considers offline contextual bandits with large neural network models is \citet{joachims2018deep}. Specifically, that paper proposes a policy-based algorithm with an objective of the form: $\frac{r_i(a_i) - \lambda}{\beta(a_i|x_i)} \pi(a_i|x_i)$ for some constant baseline $ \lambda$ determined by a hyperparameter sweep, but motivated by a connection to self-normalized importance sampling.

Our work contrasts with this prior work in two key ways. First, we show that the algorithm proposed in \citet{joachims2018deep} is action-unstable. Specifically, our Lemma \ref{lem:pb-stable} shows that any policy-based algorithm with an objective of the form $ \sum_i f(z_i(a_i)) \pi(a_i|x_i)$ cannot be action-stable unless the sign of $ f(z(a))$ is the indicator of the optimal action. However, since that paper only tests the algorithm on classification problems where the rewards are in $ \{0,1\}$, any setting of $ \lambda$ between 0 and 1 causes the sign of $ f $ to indicate the optimal action. The action-stability analysis shows how this algorithm will struggle beyond the classification setting. 

Second, we show that value-based methods provably and empirically work in the overparameterized setting. In contrast, \citet{joachims2018deep} does not consider value-based methods. We show that value-based methods are not affected by action-stability issues (Lemma \ref{lem:vb-stable}) and have vanishing bandit error (Theorem \ref{thm:reduction}). We empirically test this conclusion on the same CIFAR-10 bandit problem as \citet{joachims2018deep} and find that a value-based approach outperforms the policy-based approach proposed in that paper (Figure \ref{fig:cifar_regret}).

\subsection{Variance of importance weighting}

The importance weighted value estimates used by policy-based algorithms suffer from high variance due to low probability actions that have very large importance weights.
Much prior work focuses on reducing this variance \cite{strehl2010learning, bottou2013counterfactual, swaminathan2015counterfactual}. 
In contrast, the issue we consider, action-instability in the overparameterized setting, is distinct from this variance issue. 
When the policy class is flexible enough to optimize the objective at each datapoint, the optimal predictor in that class does not depend on the importance weights.
Meanwhile action-unstable objectives translate stochasticity in the behavior policy into noise in the objective, causing the overfitting issues that we see in policy-based algorithms. 
In fact, our Theorem \ref{thm:vs} suggests that regret will be worse for more uniform behavior policies when using an action-unstable objective, even though these may be beneficial in terms of variance.
This is born out in our experiments where the behavior is usually \emph{uniform} and \emph{known}, which is a favorable setup in terms of the variance of the value estimates, but an unfavorable setup for action-unstable policy learning algorithms.




\section{Discussion}
We have examined the offline contextual bandit problem with overparameterized models. We introduced a new regret decomposition to separate the effects of estimation error and bandit error. We showed that policy-based algorithms are not action-stable and thus suffer from high bandit error with stochastic behavior policies. 
This is borne out both in the theory and experiments.

It is important to emphasize that our results may not apply beyond the setting we consider in this paper. When there is no strict positivity, there is unobserved confounding, there are continuous actions, or the model classes are small and misspecified then policy-based learning may have lower regret and lower bandit error than value-based learning. 

In future work we hope to extend the ideas from the bandit setting to the full RL problem with longer horizon that requires temporal credit assignment. We predict that action-stability and bandit error remain significant issues there. 
We also hope to investigate action-stable algorithms beyond the simple value-based algorithms we consider here.

\subsection*{Acknowledgements}
We would like to thank Aahlad Puli for thoughtful conversations and Aaron Zweig, Min Jae Song, and Evgenii Nikishin for comments on earlier drafts.

This work is partially supported by the Alfred P. Sloan Foundation, NSF RI-1816753, NSF CAREER CIF 1845360, NSF CHS-1901091, Samsung Electronics, and the Institute for Advanced Study.
DB is supported by the Department of Defense (DoD) through the National Defense Science \& Engineering Graduate Fellowship (NDSEG) Program.

\bibliography{rl}
\bibliographystyle{icml2021}

\newpage

\onecolumn
\appendix
\noindent\rule{\textwidth}{1pt}
\vskip 0.1in
\vskip -\parskip
 \begin{center}
{\textbf{\LARGE Appendix}} \end{center}
\noindent\rule{\textwidth}{1pt}

\section{Action-stability}\label{app:stable}

\vbstable*

\begin{proof}
Consider a datapoint $ z= (x,r)$ which becomes $ z(a) = (x, a, r(a))$ when we sample action $ a$ from the behavior. At this datapoint, the value-based objective for an estimated Q function $ \hat Q$ is 
\begin{align}
    \ell(z(a), \hat Q(x, a)) = (r(a) - \hat Q(x,a))^2
\end{align}
This is minimized at all $ a $ by $\hat Q(x,a) = r(a)$. So setting $ y^* = \hat Q(x, \cdot) = r$, we can exactly minimize $ \ell$ at $ z$. Since such a $ y^*$ exists, the objective is by definition action-stable.
\end{proof}

\pbstable*

\begin{proof}
Consider a datapoint $ z= (x,r)$ which becomes $ z(a) = (x, a, r(a), p(a))$ when we sample action $ a$ from the behavior with probability $ p(a)$. At this datapoint, a generic policy-based objective evaluated on a policy $ \hat \pi $ takes the form
\begin{align}
    \ell(z(a), \hat \pi(a|x)) = f(z(a)) \hat \pi(a|x)
\end{align}
As special examples of the function $ f $ we have the generic policy-based objective from Equation (\ref{eq:pi}) when $ f(z(a)) = \frac{r(a)}{p(a)}$. Moreover we can incorporate any baseline function $ b(x)$ so that $ f(z(a)) = \frac{r(a) - b(x)}{p(a)}$. This algorithm covers the one presented by \citet{joachims2018deep}.

Now to prove the claim, we have three cases: (1) $ f(z(a)) < 0$ for all $ a$, (2) $ f(z(a)) > 0 $ for at least two actions $ a_1, a_2$, and (3) $ f(z(a)) > 0$ at exactly one action $ a_1$. We will show that in cases 1 and 2 the objective is action-unstable, but in case 3 it is action-stable.

\paragraph{Case 1.} Assume that $ f(z(a)) < 0$ for all $ a$. Now for any given $ a $ to maximize the objective $f(z(a))\hat \pi(a|x)$ while ensuring that $ \hat \pi(a|x) $ is a valid probability we must set $ \hat \pi(a|x) = 0$. But, if we set $ \hat\pi(a|x) = 0$ for all $ a$, we no longer have a valid probability distribution, since $ 0 \not \in \Delta^K$. Thus, we cannot find $ y^* \in \Delta^K$ that optimizes the loss at $ z $ across all actions, so the objective is action-unstable. 

\paragraph{Case 2.} Assume that $ f(z(a)) > 0 $ for at least two actions $ a_1, a_2$. Now at $ a_1, a_2$ the objective $f(z(a))\hat \pi(a|x)$ is maximized by setting $ \pi(a|x) = 1$. However, there is no valid element $ y $ of $ \Delta^K$ such that $ y(a_1) = 1$ and $y(a_2) = 1$. Thus, we cannot find $ y^* \in \Delta^K$ that optimizes the loss at $ z $ across all actions, so the objective is action-unstable. 

\paragraph{Case 3.} Assume $f(z(a)) > 0$ at exactly one action $ a_1$. Then at action $ a_1$ we can maximize $ f(z(a_1))\hat \pi(a_1|x)$ by setting $ \hat\pi(a_1|x) = 1$. And since $ f(z(a)) \leq 0$ for all other actions $ a \neq a_1$, we can maximize $ f(z(a))\hat \pi(a|x)$ by setting $ \hat\pi(a|x) = 0$. Now since $ \1[a = a_1] \in \Delta^K$, there does exist a vector $ y^* \in \mathcal{Y}$ which exactly optimizes $ \ell$ regardless of which action is sampled. So, the objective is action-stable if and only if we are in this case.
\end{proof}

\section{Value-based learning}\label{app:value}

\reduction*

\begin{proof}
The proof follows directly from linking the subsequent lemmas with $ \hat \pi = \pi_{\hat Q_{S_B}}$ and $ \Pi$ be the set of all policies in Lemma \ref{lem:mismatch}.
\end{proof}


\begin{restatable}[Mismatch: from MSE to Regret]{lemma}{mismatch}\label{lem:mismatch}
Assume strict positivity. Let $\hat \pi$ be the greedy policy with respect to some $ \hat Q $ and let $ \Pi$ be any class of policies to compete against, which contains $ \hat \pi$. Then
\begin{align}
    \sup_{\pi\in \Pi} V(\pi) - V(\hat\pi) \leq 2 \sqrt{\sup_{\pi\in \Pi} \E_{x,a \sim \mathcal{D}, \pi}[ (Q(x,a) - \hat Q(x,a))^2]}
\end{align}
\end{restatable}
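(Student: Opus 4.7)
The plan is to establish the bound by a standard performance-difference decomposition that compares $V(\pi)$ and $V(\hat\pi)$ via the true $Q$, then splices in $\hat Q$ to exploit the fact that $\hat \pi$ is greedy with respect to $\hat Q$. For any $\pi \in \Pi$, I would start from the identity
\begin{align*}
V(\pi) - V(\hat\pi) &= \E_x\big[\langle Q(x,\cdot),\, \pi(\cdot|x) - \hat\pi(\cdot|x)\rangle\big]\\
&= \E_x\big[\langle Q(x,\cdot) - \hat Q(x,\cdot),\, \pi(\cdot|x)\rangle\big] + \E_x\big[\langle \hat Q(x,\cdot),\, \pi(\cdot|x) - \hat\pi(\cdot|x)\rangle\big] \\
&\quad+ \E_x\big[\langle \hat Q(x,\cdot) - Q(x,\cdot),\, \hat\pi(\cdot|x)\rangle\big].
\end{align*}

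The next (key) step is to drop the middle term: since $\hat\pi(\cdot|x)$ puts all mass on $\arg\max_a \hat Q(x,a)$, we have $\langle \hat Q(x,\cdot), \hat\pi(\cdot|x)\rangle \geq \langle \hat Q(x,\cdot), \pi(\cdot|x)\rangle$ for every $\pi$, so the middle term is $\leq 0$. What remains are two $Q-\hat Q$ error terms, one averaged under $\pi$ and one under $\hat\pi$:
\begin{align*}
V(\pi) - V(\hat\pi) \leq \E_{x,\,a\sim \pi}\big[Q(x,a) - \hat Q(x,a)\big] + \E_{x,\,a\sim \hat\pi}\big[\hat Q(x,a) - Q(x,a)\big].
\end{align*}

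I would then apply Jensen's inequality (or equivalently Cauchy--Schwarz with the constant function $1$) to each expectation separately, yielding
\begin{align*}
\E_{x,a\sim\pi}[Q-\hat Q] \leq \sqrt{\E_{x,a\sim\pi}[(Q-\hat Q)^2]},
\end{align*}
and similarly for $\hat\pi$. Since $\hat\pi\in\Pi$ by assumption, both expectations are bounded by $\sqrt{\sup_{\pi\in\Pi}\E_{x,a\sim\mathcal{D},\pi}[(Q-\hat Q)^2]}$. Adding the two contributions gives the factor of $2$ in the statement. Finally, taking $\sup_{\pi\in\Pi}$ on the left-hand side preserves the inequality since the right-hand side is independent of $\pi$.

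The only subtle step is the cancellation of the middle term via the greediness of $\hat\pi$; everything else is routine manipulation. Strict positivity is not actually used inside this lemma itself (it will be invoked in the proof of Theorem \ref{thm:reduction} to change measure from an arbitrary $\pi$ to the behavior $\beta$ via $\pi/\beta \leq 1/\tau$), so no complication arises here from the positivity assumption.
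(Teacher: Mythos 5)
Your proof is correct and is essentially the paper's own argument: both hinge on using the greediness of $\hat\pi$ with respect to $\hat Q$ to discard the $\langle \hat Q, \pi - \hat\pi\rangle$ term (the paper does this by adding and subtracting $\E_{a\sim\hat\pi}[\hat Q]$ and invoking $\E_{a\sim\hat\pi}[\hat Q] \geq \E_{a\sim\bar\pi}[\hat Q]$), then apply Jensen to each residual $Q-\hat Q$ term and bound both by the supremum over $\Pi$ using $\hat\pi\in\Pi$. Your observation that strict positivity is not actually needed in this lemma is also consistent with the paper, which states the assumption but never uses it in this proof.
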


\begin{proof}
We can expand the definition of regret and then add and subtract and apply a few inequalities. Let $ \bar \pi $ be the policy in $ \Pi $ which maximizes $ V$. Then 
\begin{align}
    \sup_{\pi\in \Pi}V(\pi) - V(\hat \pi) &=  \E_x\bigg[\E_{a\sim \bar\pi|x}[Q(x,a)] - \E_{a\sim\hat\pi|x}[Q(x,a)]\bigg]\\
    &= \E_x\bigg[\E_{a\sim \bar\pi|x}[ Q(x,a)] - \E_{a\sim\hat\pi|x}[\hat Q(x,a)] + \E_{a\sim\hat\pi|x}[\hat Q(x,a)] - \E_{a\sim\hat\pi|x}[Q(x,a)]\bigg]\\
    &\leq \E_x\bigg[\E_{a\sim \bar\pi|x}[| Q(x,a) -\hat Q(x,a)|] + \E_{a\sim\hat\pi|x}[|Q(x,a)-\hat Q(x,a)|]\bigg]\\
    &\leq \sqrt{\E_x\E_{a\sim \bar\pi|x}[(Q(x,a) -\hat Q(x,a))^2]} + \sqrt{\E_x\E_{a\sim \hat\pi|x}[(Q(x,a) -\hat Q(x,a))^2]} \\
    &\leq 2  \sqrt{ \sup_{\pi\in \Pi}\E_x\big[\E_{a\sim \pi|x}[(Q(x,a)-  \hat Q(x,a))^2]\big]}
\end{align}
The first inequality holds since $ \hat \pi $ maximizes $ \hat Q$ and by using the definition of absolute value, the second by Jensen, and the third by introducing the supremum.
\end{proof}


\begin{restatable}[Transfer: from $\beta$ to $\pi$]{lemma}{transfer}\label{lem:transfer}
Assume strict positivity and take any Q-function $ \hat Q $ and any policy $ \pi$, then
\begin{align}
    \E_{x, a\sim \mathcal{D}, \pi}[Q(x,a) - \hat Q(x,a))^2] < \frac{1}{\tau}\bigg(\E_{x,a \sim \mathcal{D}, \beta}[(Q(x,a) - \hat Q(x,a))^2]\bigg).
\end{align}
\end{restatable}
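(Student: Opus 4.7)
The plan is to run a standard change-of-measure (importance weighting) argument that exploits the strict positivity of $\beta$ guaranteed by Assumption \ref{ass:positivity}. The crucial feature of the integrand $(Q(x,a)-\hat Q(x,a))^2$ is that it is pointwise non-negative, so any pointwise upper bound on the Radon--Nikodym ratio $\pi(a|x)/\beta(a|x)$ will pass cleanly through both the inner sum over actions and the outer expectation over contexts, with no sign cancellations to worry about.

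First I would write both sides of the inequality in a common form by conditioning on $x$ and expanding the action expectation as a sum:
\begin{align*}
\E_{x,a\sim \mathcal{D},\pi}\big[(Q(x,a)-\hat Q(x,a))^2\big] = \E_x \sum_{a} \pi(a|x)\,(Q(x,a)-\hat Q(x,a))^2,
\end{align*}
and similarly for the $\beta$ expectation. Next I would multiply and divide each summand by $\beta(a|x)$, which is strictly positive by the assumption, to obtain
\begin{align*}
\E_x \sum_{a} \frac{\pi(a|x)}{\beta(a|x)}\,\beta(a|x)\,(Q(x,a)-\hat Q(x,a))^2.
\end{align*}

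The step that carries all of the content is the pointwise bound $\pi(a|x)/\beta(a|x) \leq 1/\tau$, which follows immediately from $\pi(a|x)\leq 1$ together with Assumption \ref{ass:positivity} giving $\beta(a|x)\geq \tau$. Because $(Q-\hat Q)^2\geq 0$, this bound can be pulled out of the sum, leaving exactly $\frac{1}{\tau}\E_{x,a\sim\mathcal{D},\beta}[(Q(x,a)-\hat Q(x,a))^2]$ as desired.

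There is no real obstacle here: the only thing to watch is that non-negativity of the integrand is what makes the pointwise ratio bound translate into an expectation-level bound, and this is precisely where $(Q-\hat Q)^2$ (as opposed to the signed error $Q-\hat Q$) is essential. I would state the conclusion as $\leq$ rather than the strict $<$ appearing in the statement, since the argument naturally produces a non-strict inequality; strictness holds whenever $\pi\neq \beta$ on a set of positive measure where the regression error is nonzero, but this refinement is not needed to chain into Lemma \ref{lem:mismatch} and obtain the $1/\sqrt{\tau}$ factor in Theorem \ref{thm:reduction}.
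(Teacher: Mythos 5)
Your proof is correct and is essentially the paper's own argument: the same multiply-and-divide by $\beta(a|x)$, the same pointwise bound $\pi(a|x)/\beta(a|x) \leq 1/\tau$ from strict positivity, and the same reliance on non-negativity of the squared error to pass the bound through the expectation. Your remark that the argument naturally yields $\leq$ rather than $<$ is a fair point -- the paper asserts a strict inequality its own derivation does not quite justify (equality is attainable when $\pi$ is deterministic and $\beta(a|x)=\tau$ on the support of the error), and the non-strict version is all that is needed downstream.
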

\begin{proof} Let $ \pi$ be any policy. Then 
\begin{align}
    \E_{x}\E_{a\sim \pi|x} [(Q(x,a) - \hat Q(x,a))^2] &=  \int_x p(x) \sum_a \pi(a|x) (Q(x,a) - \hat Q(x,a))^2 dx \\ 
    &=  \int_x \sum_a \pi(a|x) \frac{\beta(a|x)}{\beta(a|x)} p(x) (Q(x,a) - \hat Q(x,a))^2 dx \\ 
    &<  \frac{1}{\tau}\int_x \sum_a \beta(a|x) p(x) (Q(x,a) - \hat Q(x,a))^2 dx\\
    &= \frac{1}{\tau} \E_{x,a \sim \mathcal{D}, \beta}[(Q(x,a) - \hat Q(x,a))^2] 
\end{align}
where we use a multiply and divide trick and apply the definition of strict positivity to ensure that $ \frac{\pi(a|x)}{\beta(a|x)} < \frac{1}{\tau}$.
\end{proof}

\section{Policy-based learning}\label{app:pb}

\subsection{In-sample regret}

\begin{lemma}\label{lem:policy_decomp}
Let $ \Pi$ be an interpolating class and $ K = 2$. Then there exists a $ \pi_B $ as defined in Equation (\ref{eq:pi}) such that
\begin{enumerate}
    \item the behavior of $ \pi_B$ at each datapoint $ x_i \in S$ only depends on $ a_i, r_i(a_i)$, and $ p_i$
    \item $ \pi_B(\cdot|x_i) \in \{(0,1),(1,0)\}$.
\end{enumerate} 
\end{lemma}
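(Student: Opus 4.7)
The policy-based objective $\hat V_B(\pi;S_B) = \frac{1}{N}\sum_i \frac{r_i(a_i)}{p_i}\,\pi(a_i|x_i)$ is an additive sum over the dataset whose $i$-th term depends on $\pi(\cdot|x_i)$ only through its entry at the observed action $a_i$. Combined with the interpolating property of $\Pi$, this lets me build a maximizer $\pi_B\in\Pi$ whose marginal at each training context is chosen \emph{independently} to attain the per-datapoint maximum. Since for $K=2$ every per-datapoint maximum is attained at a vertex of $\Delta^2$, both claims of the lemma will then drop out at once.

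\textbf{Per-datapoint maximization.} I would fix $i$ and view $\pi(\cdot|x_i)$ as a free variable $y \in \Delta^2$. The contribution $g_i(y) := \frac{r_i(a_i)}{p_i}\, y(a_i)$ is linear in the single coordinate $y(a_i) \in [0,1]$, and for $K=2$ the other coordinate is then determined by $y(a_i)$. I would split into three cases by the sign of $r_i(a_i)$: if $r_i(a_i) > 0$ the maximum of $g_i$ over $\Delta^2$ is uniquely attained at $y = e_{a_i}$; if $r_i(a_i) < 0$ it is uniquely attained at $y = e_{\bar a_i}$, where $\bar a_i$ denotes the other action; and if $r_i(a_i) = 0$ every $y$ is optimal, in which case I break the tie by declaring $y^\star_i := e_{a_i}$. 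Strict positivity ensures $p_i>0$, so no division issue arises. In all three cases $y^\star_i \in \{e_1, e_2\}$ and $y^\star_i$ is a deterministic function of $(a_i, r_i(a_i), p_i)$.

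\textbf{Invoking interpolation.} With the target pattern $\{y^\star_i\}_{i=1}^N$ fixed as above, I would apply the interpolating property of $\Pi$ to the pointwise losses $\ell_i(y) := -g_i(y)$. Since each $y^\star_i$ attains $\inf_y \ell_i(y)$, the interpolation equality $\inf_{\pi\in\Pi}\sum_i \ell_i(\pi(x_i)) = \sum_i \inf_y \ell_i(y)$ yields a policy $\pi_B \in \Pi$ that is a maximizer of $\hat V_B$ on $\Pi$ and satisfies $\pi_B(\cdot|x_i) = y^\star_i$ for every $i$. Claim (1) then follows because $y^\star_i$ depends only on $(a_i, r_i(a_i), p_i)$, and claim (2) because $y^\star_i \in \{(0,1),(1,0)\}$.

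\textbf{Main obstacle.} The only delicate point is tie-breaking when $r_i(a_i) = 0$: the bare interpolation equality only guarantees a policy achieving the sum infimum, and does not a priori pin down that policy's marginal among the multiple per-point minimizers. I would address this by observing that on such indices $g_i \equiv 0$, so any interpolating policy may be replaced on these points by a vertex-valued marginal without altering $\hat V_B$; equivalently, this is the usual ``fits any training targets'' reading of overparameterization used throughout the paper. Under that reading the required $\pi_B$ is always available in $\Pi$, and its arg-max status is preserved.
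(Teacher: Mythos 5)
Your proof is correct and follows essentially the same route as the paper's: maximize the linear per-datapoint term $\frac{r_i(a_i)}{p_i}\pi(a_i|x_i)$ at a vertex of $\Delta^2$, invoke interpolation to realize all $N$ vertex choices simultaneously, and note that each choice is determined by $a_i$ and the sign of $r_i(a_i)$. Your explicit tie-breaking for $r_i(a_i)=0$ is a small point of extra care that the paper's proof glosses over, but it does not change the argument.
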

\begin{proof}
We will begin by proving part 2. Note that the objective that $ \pi_B$ optimizes takes the form $ \frac{r_i(a_i)}{p_i}\pi(a_i|x_i)$ at each datapoint. Since probabilities are constrained to $[0,1]$ this is optimized by $ \pi(a_i|x_i) = 0$ if $ \frac{r_i(a_i)}{p_i} < 0$ and $ \pi(a_i|x_i) = 1$ if $ \frac{r_i(a_i)}{p_i} > 0$. Since we have an overparameterized model class, we know that $ \Pi$ contains a $ \pi_B$ that can exactly choose the optimizer at each datapoint. Since $ K=2$, once we know $ \pi(a_i|x_i)$ we immediately have $ \pi(\bar a_i|x_i) = 1 - \pi(a_i|x_i)$ (where $ \hat a_i$ is the action that is not equal to $ a_i$). Thus $ \pi_B(\cdot|x_i) \in \{(0,1),(1,0)\}$.

Now part 1 follows directly since the above reasoning showed that $ \pi_B(\cdot|x_i)$ is defined precisely by the sign of $ \frac{r_i(a_i)}{p_i}$ and the identity of $ a_i$. 
\end{proof}

\vsthm*

\begin{proof}
By part 1 of Lemma \ref{lem:policy_decomp} and linearity of expectation we can decompose the expected in-sample value as
\begin{align*}
    \E_{S}[V(\pi^*;S) - V(\pi_B;S)] = \frac{1}{N}\sum_{i=1}^N \E_{x_i, r_i, a_i}\bigg[ \E_{a\sim \pi^*}\E_{r|x_i}[r(a)] -  \E_{a\sim \pi_B}\E_{r|x_i}[r(a)]\bigg].
\end{align*}

Since the data are iid we further have that
\begin{align*}
    \E_{S}[V(\pi^*;S) - V(\pi_B;S)] = \E_{x_i, r_i, a_i}\bigg[ \E_{a\sim \pi^*}\E_{r|x_i}[r(a)] -  \E_{a\sim \pi_B}\E_{r|x_i}[r(a)]\bigg].
\end{align*}

Define the event $ U_{x,r}$ to be the event that the policy-based objective is action-unstable at $ x,r$. So $ p_u(x) = \E_{r|x}[\1[U_{x,r}]]$.
We can split this expectation up into stable and unstable parts by conditioning on either $ \overline U_{x_i, r_i}$ or $ U_{x_i, r_i}$, and lower bound the regret on the stable datapoints by 0:
\begin{align*}
    \E_{S}[V(\pi^*;S) - V(\pi_B;S)] &= \E_{x_i, r_i |\overline U_{x_i, r_i}} \E_{a_i|x_i}\bigg[ \E_{a\sim \pi^*}\E_{r|x_i}[r(a)] -  \E_{a\sim \pi_B}\E_{r|x_i}[r(a)]\bigg] \\
    &\qquad + \E_{x_i, r_i |U_{x_i, r_i}} \E_{ a_i|x_i}\bigg[ \E_{a\sim \pi^*}\E_{r|x_i}[r(a)] -  \E_{a\sim \pi_B}\E_{r|x_i}[r(a)]\bigg]\\
    &\geq \E_{x_i, r_i |U_{x_i, r_i}} \E_{ a_i|x_i}\bigg[ \E_{a\sim \pi^*}\E_{r|x_i}[r(a)] -  \E_{a\sim \pi_B}\E_{r|x_i}[r(a)]\bigg].
\end{align*}

By part 2 of Lemma \ref{lem:policy_decomp} we know that $ \pi_B(\cdot|x_i) $ is either $ (1,0)$ or $ (0,1)$. Conditioned on the objective being unstable at $ x_i$ and using the fact that there are only two actions, we know that $ \pi_B(x_i)$ must be different depending on whether $ a_i = 1$ or $ a_i = 2$. Define $ a_{i,B}^1$ to be the action that $ \pi_B$ selects at $ x_i $ when $ a_i = 1$ and $ a_{i,B}^2$ the action when $ a_i = 2$.
Let $ a_i^*$ be the action chosen by the deterministic optimal policy $ \pi^*$ at $ x_i$.
Thus we can split the expectation over $ a_i$ in the above expression and then plug in definitions to get: 
\begin{align*}
    \E_{S}[V(\pi^*;S) - V(\pi_B;S)] &\geq \E_{x_i, r_i |U_{x_i, r_i}} \bigg[\beta(a_i = 1|x_i) \E_{r|x_i}[r(a^*_i) - r(a_{i,B}^1)] + \beta(a_i = 2|x_i) \E_{r|x_i}[r(a^*_i) - r(a_{i,B}^2)]\bigg].
\end{align*}
Since we assumed that $ \beta (a|x_i) \geq \tau$ for all $ a$ we can lower bound the above by
\begin{align*}
    \E_{S}[V(\pi^*;S) - V(\pi_B;S)] &\geq \tau \E_{x_i, r_i} \bigg[\1[U_{x_i, r_i}]\bigg( \E_{r|x_i}[r(a_i^*) - r(a_{i,B}^1)] +  \E_{r|x_i}[r(a_i^*) - r(a_{i,B}^2)]\bigg)\bigg].
\end{align*}
Finally, we note that since $ a_{i,B}^1 \neq a_{i,B}^2$ and there are only 2 actions that the above is precisely
\begin{align*}
    \E_{S}[V(\pi^*;S) - V(\pi_B;S)] &\geq  \tau \E_{x_i, r_i} \bigg[\1[\overline E_{x_i, r_i}] \E_{r|x_i}[r(a_i^*) - r(a \neq a^*_i)]\bigg]\\
    &= \tau \E_{x_i, r_i} [\1[\overline E_{x_i, r_i}] \Delta_r(x_i)]\\
    &= \tau \E_{x_i} [\E_{r_i|x_i}[\1[U_{x_i, r_i}]] \Delta_r(x_i)]\\
    &= \tau \E_{x_i} [p_u(x_i) \Delta_r(x_i)]\\
    &= \tau \E_{x} [p_u(x) \Delta_r(x)].
\end{align*}
\end{proof}

\subsection{Connection to noisy classification}\label{app:noisy}

This section states and proves the Theorem referenced in the main text connecting action-unstable policy-based learning with noisy classification.

\begin{restatable}[Noisy classification reduction]{theorem}{noisy}\label{thm:noisy}
Take any noise level $ \eta < 1/2$ and any binary classification problem $ \mathcal{C}$ consisting of a distribution $\mathcal{D}_\mathcal{C}$ over $ \mathcal{X}$ and a labeling function $ y_{\mathcal{C}}: \mathcal{X} \to \{-1,1\}$.
There exists an offline contextual bandit problem $ \mathcal{B}$ with noiseless rewards such that 
\begin{enumerate}
    \item Maximizing $ \hat V_B$ in $\mathcal{B}$ is equivalent to minimizing the 0/1 loss on a training set drawn from $ \mathcal{C}$ where labels are flipped with probability $ \eta$.
    \item Maximizing $ \hat V_F$ in $\mathcal{B}$ is equivalent to minimizing the 0/1 loss on a training set drawn from $ \mathcal{C}$ with noiseless training labels.
\end{enumerate}
\end{restatable}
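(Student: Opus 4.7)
The plan is to exhibit a specific noiseless bandit instance $\mathcal{B}$ and then read off each claim by a direct calculation on the pointwise structure of the objectives. I would take the context distribution of $\mathcal{B}$ to be $\mathcal{D}_\mathcal{C}$, the action set to be $\mathcal{A} = \{-1, +1\}$, and the deterministic reward vector to be
\[
r(a) = \1[a = y_\mathcal{C}(x)] + c\,\1[a \neq y_\mathcal{C}(x)]
\]
for a fixed constant $c \in (0,1)$. For the behavior I would set $\beta(y_\mathcal{C}(x)|x) = 1-\eta$ and $\beta(-y_\mathcal{C}(x)|x) = \eta$, which obeys Assumption~\ref{ass:positivity} with $\tau = \eta$ for any $\eta \in (0,1/2)$; the $\eta=0$ case is degenerate because then claims 1 and 2 trivially coincide.

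For claim 1, I would observe that each term in $\hat V_B$ takes the form $[r_i(a_i)/\beta(a_i|x_i)]\,\pi(a_i|x_i)$ with a strictly positive coefficient, since $r_i(a_i)\in\{c,1\}$ and $\beta(a_i|x_i)\in\{\eta,1-\eta\}$ are all positive by construction. Hence over an interpolating deterministic policy class the maximizers of $\hat V_B$ are exactly those satisfying $\pi(a_i|x_i)=1$ for every $i$. Now define the induced labels $\tilde y_i := a_i$; conditional on $x_i$, $\tilde y_i = y_\mathcal{C}(x_i)$ with probability $\beta(y_\mathcal{C}(x_i)|x_i) = 1-\eta$ and $\tilde y_i = -y_\mathcal{C}(x_i)$ with probability $\eta$, so $\{(x_i,\tilde y_i)\}_{i=1}^N$ is distributed exactly as an i.i.d.\ training set from $\mathcal{C}$ with labels independently flipped with probability $\eta$. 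Since over deterministic policies the empirical 0/1 loss against $\tilde y_i$ equals $1 - \frac{1}{N}\sum_i \pi(a_i|x_i)$, its minimizers coincide with the maximizers of $\hat V_B$.

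For claim 2, plugging the construction into the definition of $\hat V_F$ and using $\pi(y|x)+\pi(-y|x)=1$ gives
\[
\hat V_F(\pi;S_F) = c + (1-c)\cdot\frac{1}{N}\sum_{i=1}^N \pi(y_\mathcal{C}(x_i)|x_i).
\]
Because $1-c > 0$, maximizing $\hat V_F$ over an interpolating deterministic policy class is equivalent to setting $\pi(y_\mathcal{C}(x_i)|x_i)=1$ at every $i$, which is exactly minimizing the empirical 0/1 loss on the noiseless training set $\{(x_i, y_\mathcal{C}(x_i))\}_{i=1}^N$.

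The main subtlety I would flag is the meaning of ``equivalent'' in the theorem statement. The two objectives $\hat V_B$ and the noisy empirical 0/1 risk are not affine transforms of each other because the bandit weights $r_i(a_i)/\beta(a_i|x_i)$ vary across $i$ while the 0/1 weights are uniform. What saves the equivalence is the combination of overparameterization and strict positivity of those weights: both objectives decouple across training points, and each per-point subproblem has the same unique deterministic maximizer $\pi(a_i|x_i)=1$, so the argmax sets over any interpolating class coincide. Formalizing ``equivalent'' as this coincidence of argmax sets (rather than equality up to affine transformation) is what makes the reduction clean; without the overparameterization assumption the stronger sense of equivalence would fail.
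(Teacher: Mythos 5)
Your construction is correct, but it diverges from the paper's in one substantive way, and the comparison is worth making explicit. The paper's instance couples the reward magnitudes to the behavior probabilities --- it sets $r|x = c_r(1-\eta,\eta)$ exactly when $\beta(\cdot|x) = (1-\eta,\eta)$ --- so that the importance weight $r_i(a_i)/\beta(a_i|x_i)$ collapses to the constant $c_r$ for every datapoint. This makes $\hat V_B(\pi) = \frac{c_r}{N}\sum_i \pi(a_i|x_i)$ an exact positive affine transform of the noisy $0/1$ risk, so the equivalence in claim 1 holds for \emph{any} policy class, with no appeal to interpolation. Your construction leaves the weights uncancelled (they range over $\{c/\eta, c/(1-\eta), 1/\eta, 1/(1-\eta)\}$), so $\hat V_B$ is a non-uniformly weighted agreement count, and you correctly compensate by restricting to interpolating classes and arguing that the per-point argmaxes coincide. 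The subtlety you flag is real and is precisely what the paper's choice of rewards is engineered to avoid: under your instance the equivalence genuinely fails for restricted classes, whereas the paper's version needs no such caveat. A second, more minor point: your argmax-set argument tacitly assumes an interpolating policy exists, i.e.\ that no context appears twice with conflicting observed actions; if that degenerate event occurs, a weighted objective and the uniform $0/1$ loss can disagree even over a maximal class. For claim 2 the two proofs are essentially identical --- both exhibit $\hat V_F$ as a positive affine transform of the clean agreement rate, yours via the constant $c$ and the paper's via $c_r\eta + c_r(1-2\eta)\pi^*(a|x)$. In short: your proof establishes the theorem in the sense relevant to the paper's overparameterized setting, but the paper's reward--behavior coupling buys a strictly stronger, class-independent equivalence at no extra cost, and is the cleaner construction to adopt.
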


\begin{proof}
First we will construct the bandit problem $ \mathcal{B}$ with two actions corresponding to the classification problem $ \mathcal{C}$. 
For any constant $ c_r > 0 $ we define $ \mathcal{B}$ by 
\begin{align}
    x \sim \mathcal{D}_{\mathcal{C}}, \qquad r|x = \begin{cases}c_r(1-\eta, \eta) & y_\mathcal{C}(x) = 1\\ c_r(\eta, 1-\eta) & y_\mathcal{C}(x) = -1\end{cases}, \qquad \beta(1|x) = \begin{cases}1-\eta & y_\mathcal{C}(x) = 1\\ \eta & y_\mathcal{C}(x) = -1\end{cases}
\end{align}
Now we will show that in this problem, $ \hat V_B$ is equivalent to the 0/1 loss for $ \mathcal{C}$ with noisy labels. To do this first note that by construction, for $ x $ with $ y_\mathcal{C}(x) = 1$ we have $ \frac{r(1)|x}{\beta(1|x)} = \frac{c_r(1-\eta)}{1-\eta} = c_r$ and $ \frac{r(2)|x}{\beta(2|x)} = \frac{c_r\eta}{\eta} = c_r$, and similarly for $ x $ with $ y_\mathcal{C}(x) = -1$ we have $ \frac{r(1)|x}{\beta(1|x)} = \frac{c_r\eta}{\eta} = c_r$ and $ \frac{r(2)|x}{\beta(2|x)} = \frac{c_r(1-\eta)}{1-\eta} = c_r$.
\begin{align}
    \hat V_B(\pi) &= \frac{1}{N} \sum_{i=1}^N r_i(a_i) \frac{\pi(a_i|x_i)}{\beta(a_i|x_i)} = \frac{1}{N} \sum_{i=1}^N  \frac{r_i(a_i)}{\beta(a_i|x_i)} \pi(a_i|x_i) \\
    &= \frac{c_r}{N} \sum_{i=1}^N  \pi(a_i|x_i)  
\end{align}
This is equivalent to 0/1 loss with noisy labels since $ \beta$ generates $ a_i$ according to $ y_\mathcal{C}$ where the label is flipped with probability $ \eta$.

Now we will show that $ \hat V_F$ is equivalent to the 0/1 loss for $ \mathcal{C}$ with clean labels. Note that by construction $ r(a)|x = c_r \eta + \pi^*(a|x)c_r (1 - 2 \eta)$. So,
\begin{align}
    \hat V_F(\pi) &= \frac{1}{N} \sum_{i=1}^N \langle r_i, \pi(\cdot|x_i)\rangle = \frac{c_r}{N} \sum_{i=1}^N \langle \eta \textbf{1} + (1 - 2\eta) \pi^*(\cdot|x_i), \pi(\cdot|x_i)\rangle \\
    &= \frac{c_r \eta}{N} +  \frac{c_r (1 - 2\eta)}{N} \sum_{i=1}^N \langle  \pi^*(\cdot|x_i), \pi(\cdot|x_i)\rangle
\end{align}
This is equivalent to 0/1 loss with noisy labels since $ \pi^*$ exactly corresponds to $ y_\mathcal{C}$.
\end{proof}

\subsection{Nearest Neighbor}

\nn*

\begin{proof}
First we need to formally define the nearest neighbor rules that interpolate the objectives $ \hat V_B$ and $ \hat V_F$. These are simple in the case of two actions. Let $ i(x)$ be the index of the nearest neighbor to $ x$ in the dataset. Then
\begin{align}
    \pi_B(a|x) = \begin{cases}1 & \big(a = a_{i(x)} \ \textsc{and}\  r_{i(x)}(a_{i(x)}) > 0\big) \textsc{or} \big(a \neq a_{i(x)} \ \textsc{and}\  r_{i(x)}(a_{i(x)}) \leq 0\big) \\ 0 & otherwise.\end{cases}
\end{align}
This is saying that $ \pi_B$ chooses the same action as the observed nearest neighbor if that reward was positive, and the opposite action if that was negative. 
And for the full feedback we just choose the best action from the nearest datapoint.
\begin{align}
    \pi_F(a|x) = \begin{cases}1 & a = \arg\max_{a'} r_{i(x)}(a')\\ 0 & otherwise. \end{cases}
\end{align}

Now we can construct the problem instances needed for the Theorem.
To construct the example, take a bandit problem with two actions (called 1 and 2):
\begin{align*}
    x \sim U([-1,1]), \quad r|x = (1, 1 + \Delta_r), \quad \beta(1|x) = \beta(2|x) = 1/2 \ \forall\ x,a
\end{align*}
The true optimal policy has $\pi^*(2|x) = 1$ for all $x$ and $ V(\pi^*) = 1+\Delta_r$.
The policy with full feedback $ \pi_F $ is to always choose action 2, since every observation will show that action 2 is better.

Now, we will show that in the limit of infinite data, $ \pi_F$ has no regret. Since the rewards are noiseless, the maximum observed reward at a context is exactly the optimal action at that context. Thus, we precisely have a classification problem with noiseless labels so that the Bayes risk is 0.
Since we $\pi^*$ is continuous, the class conditional densities (determined by the indicator of the argmax of $ Q$) are piecewise continuous.
This allows us to apply the classic result of \cite{cover1967} that a nearest neighbor rule has asymptotic risk less twice the Bayes risk, which in this case is zero. This means that asymptotically $ P(\pi_F(a|x) \neq \pi^*(a|x)) = 0$ which immediately gives the second desired result of zero regret in the limit of infinite data under full feedback.

Now we note that since rewards are always positive, we can simplify the definition of $ \pi_B$ as 
\begin{align}
    \pi_{B}(a|x) = \1[a = a_{i(x)}].
\end{align}

Then we have that
\begin{align}
    V(\pi_F) - V(\pi_{B}) &= \E_x [\E_{a\sim \pi_F|x}[Q(x,a)] - \E_{a\sim \pi_{B}|x}[Q(x,a)] ]\\
    &= \E_x [\Delta_r + 1 - (\pi_{B}(1|x) + \pi_{B}(2|x) (\Delta_r + 1))] ] \\
    &= \Delta_r + 1 - \E_x[\1[a_{i(x)} = 1] + (\Delta_r + 1)\1[a_{i(x)} = 2]]
\end{align}
Taking expectation over $ S$ we get
\begin{align}
    \E_S[V(\pi_F) - V(\pi_{B})] &= \E_S[\Delta_r + 1 - \E_x[\1[a_{i(x)} = 1] + (\Delta_r + 1)\1[a_{i(x)} = 2]]]\\
    &= \Delta_r + 1 - \E_x[P_S(a_{i(x)} = 1) + (\Delta_r + 1)P_S(a_{i(x)} = 2)]]\\
    &= \Delta_r + 1 - \E_x[\frac{1}{2} + (\Delta_r + 1)\frac{1}{2}]]\\
    &= \frac{\Delta_r}{2}
\end{align}
This construction did not depend on the size of the dataset, so it is even true as the number of datapoints tends to infinity.
\end{proof}

\section{Discussion of doubly robust algorithms}\label{app:dr}

Before going into the comparison, we will define the doubly robust algorithm \cite{dudik2011doubly} in our notation. Specifically,
\begin{align}\label{eq:dr}
    \widehat V_{DR} (\pi) := \sum_{i=1}^N \bigg[ \sum_a \pi(a|x_i) \hat Q(x_i, a) + \frac{\pi(a_i|x_i)}{\beta(a_i|x_i)} (r_i(a_i) - \hat Q(x_i, a_i)) \bigg], \qquad \hat \pi_{DR} = \arg\max_{\pi \in \Pi} \widehat V_{DR}(\pi)
\end{align}

As stated in the main text, when we use overparameterized models and train $ \hat Q$ on the same data that we use to optimize the policy, then doubly robust methods are equivalent to the vanilla value-based algorithm. This is formalized in Lemma \ref{lem:dr_equiv} below.

This equivalence can be avoided by using crossfitting so that $ \hat Q$ is not trained on the same data as $ \pi$. However, then it is possible that the doubly robust policy objective becomes action-unstable. This is true \emph{even with} access to the true $Q$ function, but requires stochastic rewards. To construct such an example we leverage the stochastic rewards so that instability only occurs at datapoints where certain reward vectors are sampled. This is shown in Lemma \ref{lem:dr_unstable} below.

One final point is to consider the motivation for doubly robust methods. Usually it is motivated by concerns about consistency of the value function estimation or estimation of behavior policy \cite{dudik2011doubly}. However, in our setting we have (1) an overparamterized model class which is large enough to contain the true value function, and (2) exact access to the behavior probabilities. So it is not clear why doubly robust methods would be motivated in our setting. 

\begin{lemma}[Equivalence of DR and vanilla VB]\label{lem:dr_equiv}
When we use overparameterized models and do not use crossfitting, doubly robust learning from Equation (\ref{eq:dr}) is equivalent to vanilla value-based learning from Equation (\ref{eqn:hatQ}).
\end{lemma}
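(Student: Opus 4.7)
The plan is to exploit overparameterization twice: once for the value class $\mathcal{Q}$ to kill the importance-weighted residual in the doubly robust objective, and once for the policy class $\Pi$ to match the argmax behavior of the greedy policy. Concretely, since $\hat Q_{S_B}$ is obtained by minimizing the squared loss in (\ref{eqn:hatQ}) over an interpolating class, the training objective attains its pointwise minimum at every datapoint, which gives
\begin{equation*}
    \hat Q_{S_B}(x_i, a_i) \;=\; r_i(a_i) \qquad \text{for all } i=1,\dots,N.
\end{equation*}
Plugging this identity into the DR objective (\ref{eq:dr}) causes the bracketed correction term $\frac{\pi(a_i|x_i)}{\beta(a_i|x_i)}(r_i(a_i) - \hat Q_{S_B}(x_i, a_i))$ to vanish identically, reducing the DR objective to
\begin{equation*}
    \widehat V_{DR}(\pi) \;=\; \sum_{i=1}^N \sum_a \pi(a|x_i)\, \hat Q_{S_B}(x_i, a),
\end{equation*}
independently of the observed actions.

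Next I would argue that the maximizer of this reduced objective over an overparameterized policy class coincides with the greedy policy $\pi_{\hat Q_{S_B}}$. Pointwise for each $i$, the inner sum $\sum_a \pi(a|x_i) \hat Q_{S_B}(x_i, a)$ is maximized over $\pi(\cdot|x_i) \in \Delta^K$ by putting all mass on $\arg\max_{a'} \hat Q_{S_B}(x_i, a')$, which is exactly $\pi_{\hat Q_{S_B}}(\cdot|x_i)$. Because $\Pi$ is overparameterized, there exists a $\pi \in \Pi$ that realizes this pointwise maximum at every $x_i$ simultaneously, so $\pi_{\hat Q_{S_B}}$ (or any member of $\Pi$ that agrees with it on the contexts $\{x_i\}$) is a valid maximizer of $\widehat V_{DR}$. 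Thus the policy returned by doubly robust learning agrees with $\pi_{\hat Q_{S_B}}$ on the training contexts and, since any tie-breaking rule used for $\pi_{\hat Q_{S_B}}$ can be matched by the overparameterized policy class, the two algorithms produce equivalent outputs.

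The main (minor) obstacle is nondegeneracy: when $\arg\max_{a'} \hat Q_{S_B}(x_i, a')$ is not unique, both algorithms only pin down the behavior up to the choice among maximizers, so the ``equivalence'' should be read as equality of the set of maximizers of the two objectives on the training contexts, plus identical greedy action sets. I would briefly note this and observe that off the training set the DR policy is free to be any element of $\Pi$ that achieves the pointwise maxima, so the equivalence is properly phrased as: every optimizer of $\widehat V_{DR}$ is greedy with respect to $\hat Q_{S_B}$ on $\{x_i\}_{i=1}^N$, and $\pi_{\hat Q_{S_B}}$ is itself such an optimizer. No other step requires nontrivial calculation.
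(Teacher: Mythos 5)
Your proposal is correct and follows essentially the same two-step argument as the paper: interpolation of $\hat Q$ makes the importance-weighted residual vanish, after which the remaining term is maximized pointwise by the greedy policy, which an overparameterized policy class can realize. Your additional remarks on tie-breaking and off-training-set behavior are a reasonable refinement of the same argument, not a different route.
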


\begin{proof}
When the model for $ \hat Q$ is overparameterized and trained on the full dataset, we know that $ \hat Q(x_i, a_i) = r_i(a_i)$. Thus we get that 
\begin{align}
    \widehat V_{DR} (\pi) &= \sum_{i=1}^N \bigg[ \sum_a \pi(a|x_i) \hat Q(x_i, a) + \frac{\pi(a_i|x_i)}{\beta(a_i|x_i)} (r_i(a_i) - \hat Q(x_i, a_i)) \bigg]\\
    &= \sum_{i=1}^N \bigg[ \sum_a \pi(a|x_i) \hat Q(x_i, a) + \frac{\pi(a_i|x_i)}{\beta(a_i|x_i)} (0) \bigg]\\
    &= \sum_{i=1}^N \sum_a \pi(a|x_i) \hat Q(x_i, a) 
\end{align}
With an overparameterized policy class, we can exactly recover the greedy policy relative to $ \hat Q$ to optimize this objective.
\end{proof}

\begin{lemma}[Instability of DR]\label{lem:dr_unstable}
There exist problems with stochastic rewards where even with access to the exact Q function, the doubly robust policy objective is action-unstable with probability 1/2. 
\end{lemma}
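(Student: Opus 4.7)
The plan is to write the per-datapoint doubly robust objective as a linear function of $\pi(\cdot|x) \in \Delta^K$, identify the ``effective reward vector'' it induces, and then construct an explicit small example in which the argmax of that effective reward vector depends on which action was observed for one of the reward realizations.

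First I would rewrite the per-datapoint term of $\widehat V_{DR}$, evaluated with the true $Q$, when the observed action is $a$:
\begin{align*}
    g_a(\pi(\cdot|x)) &= \sum_{a'} \pi(a'|x) Q(x,a') + \frac{\pi(a|x)}{\beta(a|x)}\bigl(r(a) - Q(x,a)\bigr)\\
    &= \sum_{a' \neq a} \pi(a'|x) Q(x,a') + \pi(a|x)\,\underbrace{\Bigl[Q(x,a) + \tfrac{r(a)-Q(x,a)}{\beta(a|x)}\Bigr]}_{=:\,\widetilde Q_a(x,a)}.
\end{align*}
Since $g_a$ is linear in $\pi(\cdot|x)$ over the simplex, its maximizer (generically) is a vertex at the coordinate achieving the largest ``effective reward,'' namely $\widetilde Q_a(x,a)$ at coordinate $a$ and $Q(x,a')$ at the other coordinates. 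Action-stability at $z=(x,r,p)$ therefore requires this argmax to coincide for every observed action $a$.

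Next I would exhibit a concrete bandit problem. Take $K=2$, a single context $x$, uniform behavior $\beta(1|x)=\beta(2|x)=1/2$, and deterministic $Q(x,1)=0$, $Q(x,2)=1$. Let the reward vector be stochastic with
\begin{align*}
    r = (2,1) \text{ w.p.\ } 1/2, \qquad r = (-2,1) \text{ w.p.\ } 1/2,
\end{align*}
so that $\mathbb{E}[r\mid x] = (0,1) = Q(x,\cdot)$, meaning $Q$ really is the true conditional expectation. I would then do a case analysis. For the realization $r=(2,1)$: observing $a=1$ yields effective rewards $(\widetilde Q_1(x,1), Q(x,2)) = (0 + (2-0)/\tfrac12,\, 1) = (4,1)$, so the argmax is action $1$; observing $a=2$ yields $(Q(x,1), \widetilde Q_2(x,2)) = (0,\, 1 + (1-1)/\tfrac12) = (0,1)$, so the argmax is action $2$. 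No $\pi^*\in\Delta^2$ can simultaneously put all mass on both vertices, so the objective is action-unstable on this realization. For the realization $r=(-2,1)$: observing $a=1$ yields $(-4,1)$ and observing $a=2$ yields $(0,1)$, so action $2$ is optimal in both subcases, and $\pi^*=(0,1)$ works; the objective is action-stable here.

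Combining the two realizations, the doubly robust objective is action-unstable with probability exactly $1/2$, which is the claim. The main obstacle is the construction itself: one must pick $Q$, $\beta$, and the reward distribution so that (i) $Q$ really equals $\mathbb{E}[r\mid x]$ (so we are using the ``exact'' $Q$), and (ii) the correction term $(r(a)-Q(x,a))/\beta(a|x)$ is large enough on at least one reward realization to flip the argmax. The example above achieves both by balancing two reward vectors symmetrically around $Q$ while making their deviation large relative to $\beta$ and the gap in $Q$.
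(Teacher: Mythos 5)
Your proposal is correct and follows essentially the same route as the paper: a single-context, two-action example with uniform behavior and a two-point stochastic reward distribution, where one reward realization (occurring with probability $1/2$) makes the per-datapoint DR objective's optimizer flip between the two simplex vertices depending on which action was observed. Your preliminary ``effective reward vector'' reformulation is a slightly more systematic way to organize the case analysis, but the construction and the argument are the same in substance; the arithmetic in both of your cases checks out.
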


\begin{proof}
We need only consider one datapoint since the action-stability property is defined on a per datapoint basis. To make this construction we will consider only two actions.
\begin{align}
    r|x = \begin{cases} (0, 1) & w.p.\ 1/2\\ (0, -2) & otherwise\end{cases}, \qquad \beta(\cdot|x) = (1/2, 1/2)
\end{align}
So, we know that
\begin{align}
    Q(\cdot|x) = (0,-0.5)
\end{align}
Now we claim that when the sampled datapoint has $ r = (0,1)$ the doubly robust objective is action-unstable (and this happens with probability 1/2 by construction).
We can explicitly expand the DR objective for the policy $ \pi$ at $ x $ when action $ a $ is sampled
\begin{align}
    \ell_{DR}(\pi, x, a, r) = \pi(1|x) \cdot 0 + \pi(2|x) \cdot (-0.5) + \frac{\pi(a|x)}{1/2}(r(a) - Q(x,a))
\end{align}
So when $a = 1$ we have $ r(a) = 0$ and $ Q(x,a) = 0$ so that
\begin{align}
    \ell_{DR}(\pi, x, a, r) = \pi(2|x) \cdot (-0.5) + 2 \cdot \pi(1|x)(0 - 0) = \pi(2|x) \cdot (-0.5)
\end{align}
And when $ a=2$ we have $ r(a) = 1$ (because that was the sampled reward) and $ Q(x,a) = 0$ so that
\begin{align}
    \ell_{DR}(\pi, x, a, r) = \pi(2|x) \cdot (-0.5)  + 2 \cdot \pi(2|x) (1 - (-0.5)) = \pi(2|x) \cdot (2.5)
\end{align}
Now, this is clearly action-unstable since the optimizer when $ a=1 $ is sampled is $ \pi(\cdot|x) = (1,0)$ while when $ a = 2$ is sampled we get $ \pi(\cdot|x) = (0,1)$. 
\end{proof}

\section{Experiments}\label{app:experiments}

\subsection{Synthetic data}

\paragraph{Data.} As described in the main text we sample some hidden reward matrix $ \theta$ and then sample contexts and rewards from isotropic Gaussians:
\begin{align*}
    \theta \sim U([0,1]^{K\times d}),\quad x \sim \mathcal{N}(0, I_d), \quad r \sim \mathcal{N}(\theta x, \epsilon I_d).
\end{align*}
Actions are sampled according to a uniform behavior:
\begin{align*}
    a\sim \beta(\cdot|x) = U(\{1,\dots,K\}).
\end{align*}
We set $ K=2, d = 10, \epsilon = 0.1$. For each random seed we take $ N = 100$ training points and sample an independent test set of 500 points.
For experiment 1 we sample $ \theta$ and one dataset of $ x,r$ tuples, then we sample 20 independent sets of actions. For experiment 2 we sample all parameters separately to construct each of the 50 datasets. 

\paragraph{Model.} For policies and Q functions we use a multilayer perceptron with one hidden layer of width 512 and ReLU activations. The only difference between policy and Q architecture is that the policy has a softmax layer on the output so that the output is a probability distribution.

\paragraph{Learning.} We train using SGD with momentum. Learning rate is 0.01, momentum is 0.9, batch size is 10, and weight decay is 0.0001. We train every model for 1000 epochs decreasing the learning rate by a factor of 10 after 200 epochs. This trains well past the point of convergence in our experience.

\begin{figure}[h]
    \centering
    \includegraphics[width=0.7\textwidth]{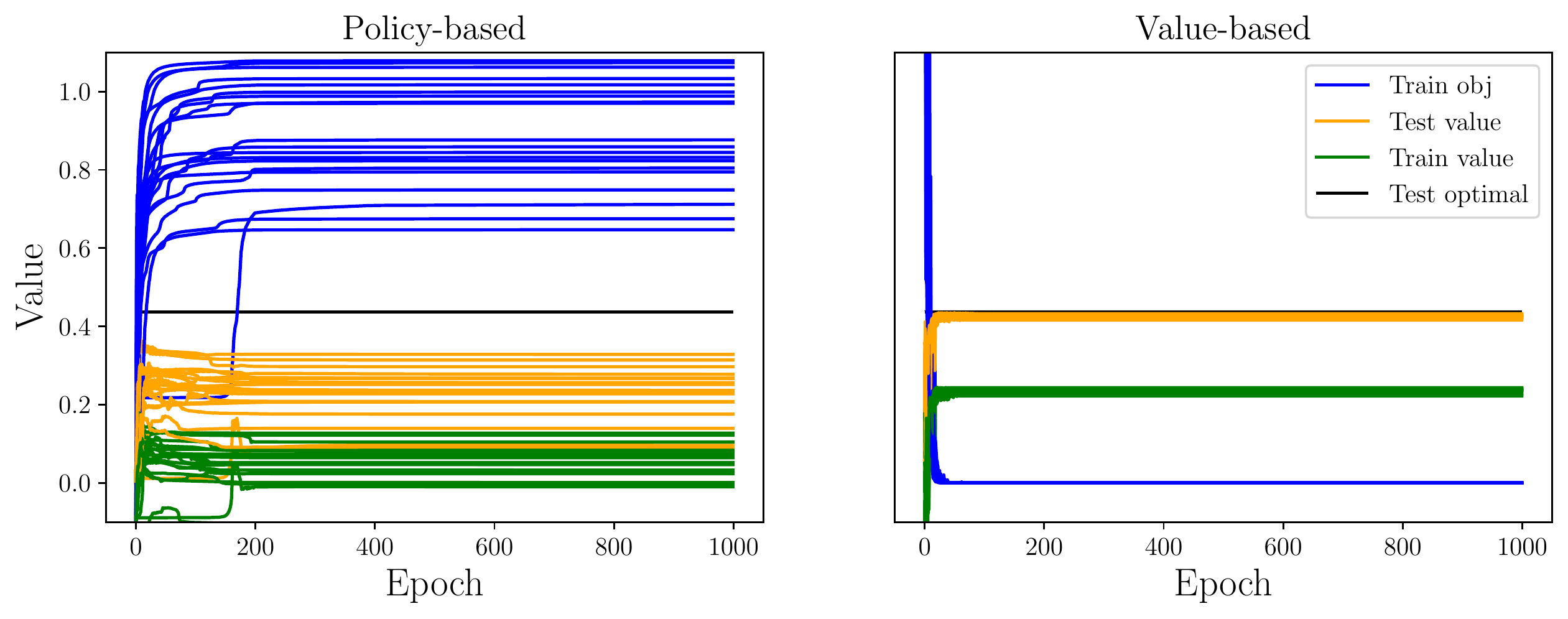}
    \caption{We show learning curves across each of the twenty different action resampled datasets.}
    \label{fig:toy_learning}
\end{figure}

\paragraph{Extended results.}
Figure \ref{fig:toy_learning} shows learning curves for each of the twenty different action datasets from experiment 1. We use ``train obj'' to refer to the training objective which is squared error for value-based learning and $ \hat V_B$ for policy-based learning. We use ``train value'' and ``test value'' to refer to  $ V(\pi;S)$ for $ S$ being the train and test sets respectively. We can evaluate the true value at each datapoint since we know the full reward vector at each datapoint. 

We see that the policy-based objective is dramatically higher than the highest achievable value due to overfitting of the noise in the actions. The gap between train and test value is mot likely explained by noise in the contexts sampled in those respective datasets (by chance the test set has higher value contexts).

\subsection{CIFAR-10}

\paragraph{Data.} We use a bandit version of the CIFAR-10 dataset \citep{Krizhevsky09learningmultiple}. We split the train set into a train set of the first 45000 examples and validation set of the last 5000. We normalize the images and use data augmentation of random flips and crops of size 32. Each of the 10 labels becomes an action. We define rewards to be 1 for a correct prediction and 0 for an incorrect prediction.
We use two different behavior policies. One is a uniform behavior that selects each action with probability 0.1 and the other is the hand-crafted behavior policy from \cite{joachims2018deep}. 

\paragraph{Model.} We use a ResNet-18 \citep{he2016deep} from PyTorch \citep{paszke2019pytorch} for both the policy and the Q function. The only modification we make to accommodate for the smaller images in CIFAR is to remove the first max-pooling layer. 

\paragraph{Learning.} We train using SGD with momentum 0.9,a batch size 128, and weight decay of 0.0001 for 1000 epochs. Training takes about 20 hours for each run on an NVIDIA RTX 2080 Ti GPU. We use a learning rate of 0.1 for the first 200 epochs, 0.01 for the next 200, and 0.001 for the last 600. To improve stability we use gradient clipping and reduce the learning rate in the very first epoch to 0.01.

\paragraph{Extended results.}
Figures \ref{fig:cifar_learning_blbf} and \ref{fig:cifar_learning_uniform} show learning curves for each of the three algorithms we consider across each dataset. The labels refer to the same quantities as they did on the synthetic problem.

One interesting phenomena is that the unstable policy-based algorithm displays a clear overfitting phenomena as we would predict due to the noise in the actions being transferred into noise in the objective. Since we have strictly positive rewards here, this is also an instance of ``propensity overfitting'' \cite{swaminathan2015self}. As a result, limiting the capacity of the model class by early stopping could improve performance somewhat. But by limiting capacity in this way we are exiting the overparameterized/interpolating regime described by \citet{zhang2016understanding}.

\begin{figure}[h]
    \centering
    \includegraphics[width=0.7\textwidth]{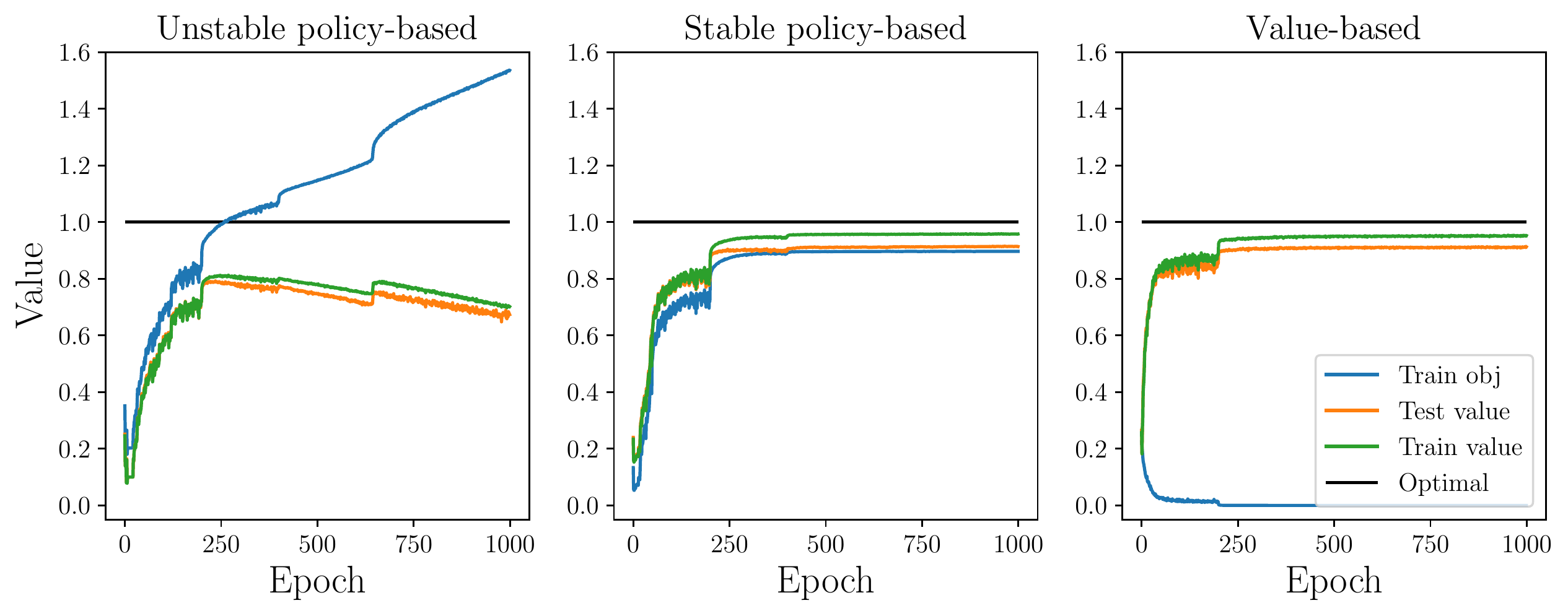}
    \caption{Learning curves on the hand-crafted action dataset.}
    \label{fig:cifar_learning_blbf}
\end{figure}

\begin{figure}[h]
    \centering
    \includegraphics[width=0.7\textwidth]{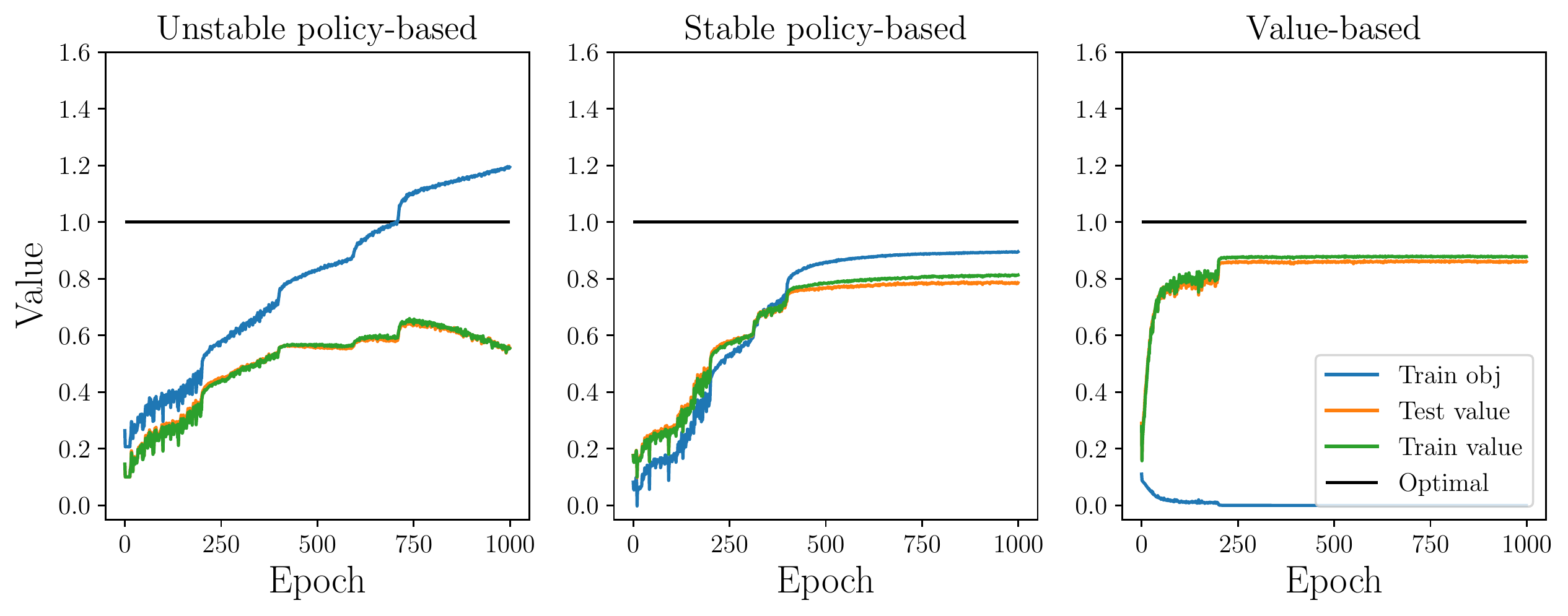}
    \caption{Learning curves on the uniform action dataset.}
    \label{fig:cifar_learning_uniform}
\end{figure}

\section{Small model classes}\label{app:small}

In this section we state and prove theorems that bound each term of our regret decomposition for each algorithm we consider when we use finite model classes. Similar results can be shown for other classical notions of model class complexity. We include these results for completeness, but the main focus of our paper is the overparameterized regime where such bounds are vacuous.

\begin{theorem}[Policy-based learning with a small model class]\label{thm:pol-small}
Assume strict positivity and a finite policy class $ \Pi$. Let $ \varepsilon_\Pi = V(\pi^*) - \sup_{\pi \in \Pi} V(\pi)$. Denote $ \Delta_r = r_{max} - r_{min}$. Then we have that for any $ \delta > 0$ with probability $ 1- \delta$ each of the following holds:
\begin{align*}
    \text{Approximation Error} &= V(\pi^*) - \sup_{\pi\in \Pi}V(\pi) \leq \varepsilon_\Pi\\
    \text{Estimation Error}  &= \sup_{\pi\in \Pi}V(\pi) - V(\pi_F) \leq 2\Delta_r \sqrt{\frac{\log(2|\Pi|/\delta)}{2N}}\\
    \text{Bandit Error}  &= V(\pi_F) - V(\pi_B) \leq \frac{2\Delta_r}{\tau} \sqrt{\frac{\log(2|\Pi|/\delta)}{2N}} 
\end{align*}
\end{theorem}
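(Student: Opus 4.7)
The plan is to prove the three bounds separately, each by a standard Hoeffding-plus-union-bound uniform-convergence argument that exploits finiteness of $\Pi$. The approximation bound is immediate: the inequality $V(\pi^*) - \sup_{\pi \in \Pi} V(\pi) \leq \varepsilon_\Pi$ holds with probability $1$ by the very definition of $\varepsilon_\Pi$ and carries no stochastic content.

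For the estimation error I would work with the full-feedback estimator $\hat V_F(\pi;S_F)$. Each summand $\langle r_i, \pi(\cdot|x_i)\rangle$ is a convex combination of entries of $r_i \in [r_{\min},r_{\max}]^K$ and hence lies in $[r_{\min},r_{\max}]$, with expectation exactly $V(\pi)$. Applying Hoeffding's inequality at a fixed $\pi$ yields $|\hat V_F(\pi) - V(\pi)| \leq \Delta_r \sqrt{\log(2|\Pi|/\delta)/(2N)}$ with probability at least $1-\delta/|\Pi|$, and a union bound over the finite class promotes this to a uniform bound over $\Pi$ with probability at least $1-\delta$. The textbook ERM sandwich $\sup_{\pi \in \Pi} V(\pi) - V(\pi_F) \leq (\sup_{\pi \in \Pi} V(\pi) - \hat V_F(\pi_F)) + (\hat V_F(\pi_F) - V(\pi_F)) \leq 2\sup_{\pi \in \Pi}|\hat V_F(\pi)-V(\pi)|$, using that $\pi_F$ maximizes $\hat V_F$ over $\Pi$, then produces the claimed factor of $2$.

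The bandit error follows the identical template but applied to the importance-weighted estimator $\hat V_B$. Unbiasedness is a one-line computation, $\E[r(a)\pi(a|x)/\beta(a|x)] = \E_x \sum_a \pi(a|x)\E_{r|x}[r(a)] = V(\pi)$, so the mean of each summand is $V(\pi)$. The key new ingredient is a range bound on each term $Z_i := r_i(a_i)\pi(a_i|x_i)/p_i$: by Assumption \ref{ass:positivity} we have $p_i \geq \tau$ and $\pi(a_i|x_i) \in [0,1]$, so the importance ratio lies in $[0,1/\tau]$ and (after the standard reduction to $r_{\min} \leq 0 \leq r_{\max}$, which does not alter the argmax of $V$) the summand has range at most $\Delta_r/\tau$. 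Hoeffding plus a union bound therefore give $\sup_{\pi \in \Pi}|\hat V_B(\pi) - V(\pi)| \leq (\Delta_r/\tau)\sqrt{\log(2|\Pi|/\delta)/(2N)}$ with probability $1-\delta$, and the same ERM sandwich, combined with the trivial $V(\pi_F) \leq \sup_{\pi \in \Pi} V(\pi)$, yields $V(\pi_F) - V(\pi_B) \leq 2\sup_{\pi \in \Pi}|\hat V_B(\pi)-V(\pi)|$. The only nontrivial step is the $1/\tau$ inflation in the importance-weighted range, which is precisely the statistical cost of bandit feedback relative to full feedback and accounts for the $1/\tau$ factor separating the estimation and bandit-error bounds.
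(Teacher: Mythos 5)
Your proposal is correct and follows essentially the same route as the paper's proof: the approximation bound is definitional, the estimation bound is Hoeffding plus a union bound on the summands $\langle r_i,\pi(\cdot|x_i)\rangle$, and the bandit bound is the same uniform-convergence sandwich applied to $\hat V_B$ with range $\Delta_r/\tau$ (the paper writes the sandwich as $V(\pi_F)-\hat V_B(\pi_B)+\hat V_B(\pi_B)-V(\pi_B)\leq V(\pi_F)-\hat V_B(\pi_F)+\hat V_B(\pi_B)-V(\pi_B)$, which is your argument with $\pi_F$ playing the role of the comparator). Your parenthetical about centering the rewards so that $r_{\min}\leq 0\leq r_{\max}$ flags a genuine subtlety in the range bound on $r_i(a_i)\pi(a_i|x_i)/p_i$ that the paper silently glosses over; just note that such a shift changes the argmax of $\hat V_B$ (though not of $V$), so it is cleaner to treat it as an assumption on the reward range than as a reduction.
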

\begin{proof}
The bound on approximation error follows directly from the definition of $ \varepsilon_\Pi$. The bound on the estimation error follows from a standard application of a Hoeffding bound on the random variables $ X_i = \langle r_i, \pi(\cdot|x_i)\rangle
$ which are bounded by $ \Delta_r$ and a union bound over the policy class. 

The bound on bandit error essentially follows Theorem 3.2 of \cite{strehl2010learning}, we include a proof for completeness:
\begin{align*}
    V(\pi_F) - V(\pi_B) &= V(\pi_F) - \hat V_B(\pi_B) + \hat V_B(\pi_B) - V(\pi_B) \\
    &\leq V(\pi_F) - \hat V_B(\pi_F) + \hat V_B(\pi_B) - V(\pi_B)\\
    &\leq 2 \sup_{\pi \in \Pi} |V(\pi) - \hat V_B(\pi)|\\
    &\leq \frac{2\Delta_r}{\tau} \sqrt{\frac{\log(2|\Pi|/\delta)}{2N}} 
\end{align*}
The first inequality comes from the definition of $ \pi_B$. The second comes since both $ \pi_F, \pi_B \in \Pi$. 
And the last inequality follows from an application of a Hoeffding bound on the random variables $ X_i = r_i(a_i)\frac{\pi(a_i|x_i)}{p_i}$ which are bounded by $ \frac{\Delta_r}{\tau}$ and a union bound over the policy class.
\end{proof}

\begin{theorem}[Value-based learning with a small model class]\label{thm:val-small}
Assume strict positivity and a finite function class $ \mathcal{Q}$ which induces a finite class of greedy policies $ \Pi_\mathcal{Q}$. Let $ \varepsilon_\mathcal{Q} = \inf_{\widehat Q\in \mathcal{Q}}\E_{x,a\sim \mathcal{D}, \beta}[(Q(x,a) - \widehat Q(x,a))^2]$. Denote $ \Delta_r = r_{max} - r_{min}$. Then we have that for any $ \delta > 0$ with probability $ 1- \delta$ each of the following holds:
\begin{align}
    \text{Approximation Error} &= V(\pi^*) - \sup_{\pi\in \Pi_{\mathcal{Q}}}V(\pi) \leq 2\sqrt{\varepsilon_{\mathcal{Q}}/ \tau}\\
    \text{Estimation Error}  &= \sup_{\pi\in \Pi_{\mathcal{Q}}}V(\pi) - V(\pi_F) \leq 2\Delta_r \sqrt{\frac{\log(|\mathcal{Q}|/\delta)}{2N}}\\
    \text{Bandit Error}  &= V(\pi_F) - V(\pi_{\widehat Q}) \leq \frac{10\Delta_r}{\sqrt{\tau}} \sqrt{\frac{\log(|\mathcal{Q}|/\delta)}{N}} + 6\sqrt{\Delta_r} \bigg(\frac{\log(|\mathcal{Q}|/\delta)}{\tau N} \varepsilon_{\mathcal{Q}}\bigg)^{1/4} + 2\sqrt{\varepsilon_{\mathcal{Q}}/ \tau}
\end{align}
\end{theorem}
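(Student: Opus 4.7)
The plan is to bound the three error terms separately, using Theorem \ref{thm:reduction} as the pipeline that converts value-based quantities into regression statements.

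For the approximation term, I would apply Theorem \ref{thm:reduction} not to the ERM $\widehat Q_{S_B}$ but to the best-in-class regressor $Q^{\circ} := \arg\min_{Q' \in \mathcal{Q}} \E_{x,a \sim \mathcal{D},\beta}[(Q(x,a) - Q'(x,a))^2]$. Reading the chain Lemma \ref{lem:mismatch}--\ref{lem:transfer}, nothing uses that $\widehat Q$ is an ERM, so the reduction gives $V(\pi^*) - V(\pi_{Q^{\circ}}) \leq (2/\sqrt{\tau})\sqrt{\varepsilon_{\mathcal{Q}}} = 2\sqrt{\varepsilon_{\mathcal{Q}}/\tau}$. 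Since $\pi_{Q^{\circ}} \in \Pi_{\mathcal{Q}}$, this also upper bounds $V(\pi^*) - \sup_{\pi \in \Pi_{\mathcal{Q}}} V(\pi)$. For the estimation term, I would carry out the standard finite-class ERM argument on the full-feedback objective: Hoeffding applied to the bounded random variables $\langle r_i, \pi(\cdot|x_i)\rangle \in [r_{\min}, r_{\max}]$ followed by a union bound over $|\Pi_{\mathcal{Q}}| \leq |\mathcal{Q}|$ yields $\sup_{\pi \in \Pi_{\mathcal{Q}}} |\hat V_F(\pi) - V(\pi)| \leq \Delta_r \sqrt{\log(|\mathcal{Q}|/\delta)/(2N)}$ with probability $1-\delta$, and the classical sandwich (add and subtract $\hat V_F$ on both $\pi_F$ and the supremum-attainer) doubles this to the stated bound.

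For the bandit term, the same reduction gives the natural starting point. Since $V(\pi_F) \leq V(\pi^*)$,
\[ V(\pi_F) - V(\pi_{\widehat Q_{S_B}}) \leq V(\pi^*) - V(\pi_{\widehat Q_{S_B}}) \leq \frac{2}{\sqrt{\tau}} \sqrt{\E_{x,a\sim\beta}[(Q - \widehat Q_{S_B})^2]}, \]
so everything reduces to a finite-class fast-rate excess-risk bound for square-loss regression. The crucial algebraic identity, which follows from $\E[r(a)|x,a] = Q(x,a)$, is that $\E_\beta[(r - Q')^2] - \E_\beta[(r - Q)^2] = \E_\beta[(Q' - Q)^2]$; this both gives a clean bias-variance decomposition for the population risk and, via bounded rewards, yields a Bernstein variance-mean condition $\mathrm{Var}_\beta[(r - Q'(x,a))^2 - (r - Q^{\circ}(x,a))^2] \lesssim \Delta_r^2\,\E_\beta[(Q' - Q^{\circ})^2]$ on the excess loss. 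A Bernstein union bound over $\mathcal{Q}$ (standard Bartlett--Mendelson-style peeling) then produces a fast-rate inequality of the form
\[ \E_\beta[(Q - \widehat Q_{S_B})^2] \leq \varepsilon_{\mathcal{Q}} + C_1 \Delta_r \sqrt{\frac{\varepsilon_{\mathcal{Q}} \log(|\mathcal{Q}|/\delta)}{N}} + C_2 \Delta_r^2 \, \frac{\log(|\mathcal{Q}|/\delta)}{N}. \]
Applying $\sqrt{a+b+c} \leq \sqrt{a} + \sqrt{b} + \sqrt{c}$ and then multiplying by the prefactor $2/\sqrt{\tau}$ produces the three terms in the claim.

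The hard part is the fast-rate regression inequality: proving the Bernstein variance-mean condition is immediate from the quadratic loss and the unbiasedness of $r$ given $(x,a)$, but turning it into the mixed $\sqrt{\varepsilon_{\mathcal{Q}}/N}$-plus-$1/N$ excess-risk rate over the finite class requires a peeling or localization argument rather than a single concentration inequality; a one-shot Hoeffding applied to square losses would only give slow $\Delta_r^2\sqrt{\log(|\mathcal{Q}|/\delta)/N}$ rates and destroy the $\sqrt{\log(|\mathcal{Q}|/\delta)/N}$ dependence visible in the first claimed term. Matching the specific constants $10$, $6$, $2$ after the subadditive square-root step is then a matter of tracking $\Delta_r$ factors carefully through the Bernstein application.
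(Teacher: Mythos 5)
Your proposal matches the paper's proof essentially step for step: the approximation term via the mismatch/transfer reduction applied to the best-in-class $\widehat Q$, the estimation term via Hoeffding plus a union bound over $\Pi_{\mathcal{Q}}$, and the bandit term via the same reduction followed by a Bernstein-type fast-rate excess-risk bound for finite-class least squares, with the square-root subadditivity step at the end. The only difference is that the paper imports that fast-rate inequality from Lemma 16 of \citet{chen2019information} rather than re-deriving it, and for a finite class a one-sided Bernstein bound per element plus a union bound (and solving the resulting self-bounding inequality) suffices---no peeling or localization argument is actually required.
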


\begin{proof}
To bound the approximation error, we can let $ \hat \pi $ be the greedy policy associated with a Q-function $ \widehat Q$ and apply Lemmas \ref{lem:mismatch} and \ref{lem:transfer}. This gives us
\begin{align}
     V(\pi^*) - \sup_{\hat \pi\in \Pi_{\mathcal{Q}}}V(\hat \pi) =  \inf_{\widehat Q\in \mathcal{Q}} [V(\pi^*) - V(\hat \pi)] \leq \inf_{\widehat Q\in \mathcal{Q}} \frac{2}{\sqrt{\tau}}\sqrt{\E_{x,a\sim \mathcal{D}, \beta}[(Q(x,a) - \widehat Q(x,a))^2]} = 2 \sqrt{\varepsilon_\mathcal{Q} / \tau}.
\end{align}
The bound on the estimation error follows the same as before from standard uniform convergence arguments.

The bound on the bandit error follows by again applying Lemmas \ref{lem:mismatch} and \ref{lem:transfer} and then making the concentration argument from Lemma 16 of \cite{chen2019information}. Explicitly, our Lemmas give us
\begin{align}
    V(\pi_F) - V(\pi_{\widehat Q}) \leq V(\pi^*) - V(\pi_{\widehat Q}) &\leq \frac{2}{\sqrt{\tau}} \sqrt{\E_{x,a\sim \mathcal{D}, \beta}[(Q(x,a) - \widehat Q(x,a))^2]}.
\end{align}
Then, to bound the squared error term, we can add and subtract:
\begin{align}
    \E_{x,a\sim \mathcal{D}, \beta}[(Q(x,a) - \widehat Q(x,a))^2] &= \E_{x,a\sim \mathcal{D}, \beta}[(Q(x,a) - \widehat Q(x,a))^2] - \inf_{\bar Q \in \mathcal{Q}}\E_{x,a\sim \mathcal{D}, \beta}[(Q(x,a) - \bar Q(x,a))^2] \\ &\qquad+ \inf_{\bar Q \in \mathcal{Q}}\E_{x,a\sim \mathcal{D}, \beta}[(Q(x,a) - \bar Q(x,a))^2]\\
    &\leq \E_{x,a\sim \mathcal{D}, \beta}[(Q(x,a) - \widehat Q(x,a))^2] - \inf_{\bar Q \in \mathcal{Q}}\E_{x,a\sim \mathcal{D}, \beta}[(Q(x,a) - \bar Q(x,a))^2] \\&\qquad+ \varepsilon_\mathcal{Q}.
\end{align}
Now we want to show that the difference in squared error terms concentrates for large $ N$. This is precisely what Lemma 16 from \cite{chen2019information} does using a one-sided Bernstein inequality. This gives us for any $ \delta > 0$ an upper bound with probability $ 1-\delta$ of
\begin{align}
    \frac{56\Delta_r^2 \log(|\mathcal{Q}|/\delta)}{3N} + \sqrt{\varepsilon_\mathcal{Q} \frac{32\Delta_r^2 \log(|\mathcal{Q}|/\delta)}{N}}
\end{align}
Plugging this in and simplifying the constants gives the result. 
\end{proof}

\end{document}